\documentclass[10pt]{article}

\usepackage[margin=1in]{geometry}

\usepackage{newtxtext}

\usepackage{amsmath,amsthm}
\usepackage{newtxmath}
\usepackage{url}

\usepackage{graphicx}
\usepackage{booktabs,array}
\newcolumntype{P}[1]{>{\raggedright\arraybackslash}p{#1}}

\usepackage{algorithm}
\usepackage{algpseudocode}

\usepackage{enumitem}
\usepackage{microtype}
\usepackage{xcolor}
\usepackage{fancyhdr}
\usepackage{tikz}
\usetikzlibrary{arrows.meta,positioning}

\usepackage[colorlinks=true,linkcolor=blue,citecolor=blue,urlcolor=blue]{hyperref}
\usepackage{cite}

\newcommand{\R}{\mathbb{R}}
\newcommand{\norm}[1]{\left\lVert #1 \right\rVert}
\newcommand{\abs}[1]{\left\lvert #1 \right\rvert}
\newcommand{\ip}[2]{\left\langle #1, #2 \right\rangle}
\newcommand{\grad}{\nabla}
\newcommand{\hess}{\nabla^2}
\newcommand{\Ball}[2]{\mathcal{B}\!\left(#1;#2\right)}

\theoremstyle{definition}
\newtheorem{definition}{Definition}
\newtheorem{assumption}{Assumption}
\newtheorem{remark}{Remark}

\theoremstyle{plain}
\newtheorem{theorem}{Theorem}

\newtheorem{proposition}{Proposition}

\algrenewcommand\algorithmicrequire{\textbf{Input:}}
\algrenewcommand\algorithmicensure{\textbf{Output:}}

\title{Why and When Neural Networks Improve Local Approximation in Optimization}
\author{
Chengkuo Bian\thanks{University of California, Berkeley, CA 94720, USA. Email:
\texttt{chengkuobian@gmail.com}.}
\and
Pengcheng Xie\thanks{Corresponding author. Applied Mathematics and Computational Research
Division, Lawrence Berkeley National Laboratory, University of California, 1 Cyclotron
Road, Berkeley, CA 94720, USA. Email: \texttt{pxie@lbl.gov}, \texttt{pxie98@gmail.com}.}
}
\date{August 2026}

\begin{document}
\maketitle

\begin{abstract}
Published experience with neural surrogates in derivative-free optimisation is
contradictory: the same family of models that cuts the evaluation count of one solver
leaves another unchanged, or makes it worse. We show that the contradiction dissolves once
three factors are stated, and that these, rather than the fit accuracy a training curve
reports, are what delimit when a learned local model pays. \emph{Role}: a surrogate that proposes candidates
the true objective must still approve helps, while one that replaces a gradient the solver
depends on hurts. \emph{Radius}: a model fitted to an optimisation path is reliable only
inside a bounded neighbourhood, and its error neither vanishes as that neighbourhood
shrinks nor survives its growth. \emph{Room}: a surrogate can only accelerate progress the
base method is still able to make. We formalise radius-aware local generalisation, relate
it to the classical fully linear condition, and test each factor with the surrogate class,
training pipeline and base method held fixed. Over $117$
benchmark instances safeguarded assistance raises the instances solved to high accuracy
from $67$ to $84$ while gradient replacement lowers them to $65$; removing the
gradient term from the training loss cuts surrogate acceptance from $0.703$ to $0.148$; and
$1000$ paired comparisons over ten noise levels show no noise threshold, only a base method that
stops early. The same factors bound the gain: a model-based trust-region solver,
which leaves little room, drops from $88$ to $86$ when the identical surrogate is attached,
and released interpolation software stays ahead at $103$,
and on a Monte-Carlo inventory model repairing the acceptance interface is worth $10.40$
cost units against $0.00$ for the surrogate.
\end{abstract}

\noindent\textbf{Keywords:} simulation optimization; derivative-free optimization; surrogate metamodels; neural networks; Sobolev training; stochastic oracles; local generalization; data profiles.

\section{Introduction}\label{sec:intro}

We consider the unconstrained minimization problem
\begin{equation}
\min_{x \in \mathbb{R}^n} f(x),
\end{equation}
where each value of $f$ comes from a simulation run rather than from a formula, and no derivative is available. This is the standard situation in simulation-based optimisation: a queueing, inventory, agent-based or physics model is executed at a candidate design $x$, and the resulting performance measure is all the optimiser gets to see. Runs are slow, so the currency of the problem is the number of evaluations. Derivative-free optimisation (DFO) is the branch of the field built for this currency; the monograph of Conn et al.\ \cite{Conn2009DFO} and the surveys \cite{larson_menickelly_wild_2019,zhang2021,AudetHare2017} cover it.

An optimisation run leaves behind a record: the points visited, their simulated values, the step sizes that worked, and any finite-difference gradients that were paid for along the way. Fitting a cheap metamodel to that record and using it to guide the search is an old idea in the simulation literature, and neural networks are a natural modern choice of metamodel. We should say at the outset where our answer lands, because the title asks two questions
and they have different answers. \emph{Why} a learned model improves local approximation is
the easier one: fitting every evaluation the run has already paid for averages away noise
that differencing amplifies, and gradient information in the training loss restrains the
curvature a value-only fit is free to invent. Both effects are measurable: removing the gradient
term from the loss cuts the rate at which surrogate steps survive validation by a factor of
nearly five. \emph{When} that better approximation reaches the optimisation is the harder
question. Approximation quality is a precondition and not a sufficient one; what delimits
whether it converts into fewer evaluations are three factors we identify empirically. Two
are properties of the surrounding algorithm: the surrogate's role, and what the base method
leaves, separated below into unused headroom and whether its acceptance test still
functions. The third, the radius at which the method works, is a property of the fit read at
the scale the solver operates on, so it is not independent of approximation quality but is
not what a training curve reports either. Where any one fails, a better local model does not
become a better optimiser, most sharply on the Monte-Carlo model of
Section~\ref{sec:exp_sim}. Locating that boundary is the result of this paper rather than a
caveat attached to it.

Among DFO methods, model-based trust-region algorithms constitute one of the most successful frameworks. These methods construct local surrogate models and compute trial steps by approximately minimizing the models within trust regions. Classical polynomial interpolation models and their theoretical foundations are well established \cite{ConnScheinbergVicente2006TR,Conn2009DFO,AudetHare2017}. Underdetermined quadratic interpolation, pioneered in Powell's NEWUOA \cite{Powell2006NEWUOA} and refined through later model-updating strategies \cite{xieyuannew}, forms the backbone of practical solvers. Finite-difference-based methods offer a complementary line whose practical performance has been reassessed in recent studies \cite{ShiNocedal2023FD,BerahasSohabVicente2023}.

Despite these advances, classical model-based approaches face fundamental limitations in high-dimensional settings. Constructing and maintaining polynomial models becomes increasingly expensive due to the growth in the number of parameters, and the quality of interpolation depends critically on the geometry of sample sets. In practice, limited and potentially noisy data lead to fragile models and unreliable steps. These challenges are closely related to the curse of dimensionality and motivate the exploration of richer approximation classes and learning-based surrogates.

Neural networks (NNs) provide a flexible alternative to classical surrogate models. With smooth activation functions, NNs can approximate both function values and gradients while implicitly performing noise smoothing and feature extraction. Recent studies, however, reveal a nuanced picture. Giovannelli et al.\ show that although NN surrogates can achieve strong approximation accuracy for function values and gradients, they may underperform in capturing second-order information and do not necessarily improve optimization performance when directly replacing finite-difference gradients in quasi-Newton updates \cite{Giovannelli23T027}. In contrast, Taminiau et al.\ demonstrate that NNs can significantly improve performance when used in a safeguarded manner, where surrogate-based steps are accepted only if the true objective decreases sufficiently \cite{Taminiau2025Arxiv2502}; related learning-augmented designs include the neural-network-accelerated implicit filtering of \cite{Irwin2023ICML}. These contrasting findings suggest that the effectiveness of NN surrogates depends on how the surrogate is embedded in the algorithm, not on approximation accuracy alone.

From a theoretical perspective, the role of surrogate models in trust-region methods is governed by uniform approximation properties, such as fully linear and fully quadratic conditions \cite{Conn2009DFO}. While polynomial models can satisfy these conditions under suitable sampling geometry, it remains less understood when learned surrogates---particularly neural networks---can achieve comparable guarantees. Recent advances in neural approximation theory provide partial insights. For example, objective-value-change and shape-based perspectives offer new ways to characterize local approximation quality beyond classical interpolation error \cite{xie2025objectivevaluechangeshapebased}. Additionally, the relationship between interpolation geometry and robustness, including $\Lambda$-poisedness and data irregularity, has been explored in \cite{zhang2024relationshiplambdapoisednessderivativefreeoptimization}. These developments indicate that learning-based surrogates should be analyzed through a combination of approximation theory, data geometry, and algorithmic integration.

Another important dimension is scalability and application context. Modern optimization problems often involve large-scale systems, distributed settings, and black-box pipelines, and applications ranging from simulation-based design to privacy-constrained and uncertainty-aware settings impose their own constraints on data access, noise, and computational cost \cite{larson_menickelly_wild_2019}. These trends further motivate the integration of learning-based surrogates into DFO frameworks in a principled and scalable manner.

\paragraph{Main message.} This paper is organised around one claim:
\begin{quote}
\emph{Whether an NN surrogate improves a DFO method is determined primarily by (i) the \textbf{algorithmic role} in which the surrogate is embedded, replacement of a core quantity of the base method versus safeguarded assistance, and (ii) whether the method operates within the surrogate's \textbf{radius of reliable local generalization}, with a third factor emerging from the experiments: what the base method leaves undone. Pointwise approximation accuracy alone is a poor predictor of optimization benefit.}
\end{quote}
The claim explains the disagreement described above. Under replacement, an error in the surrogate goes straight into a quantity the solver acts on, and from there into its search direction and its curvature estimates. Under assistance, the same error is filtered by a test on the true objective, which the surrogate cannot fake. The radius half of the claim is equally concrete: a model fitted to a handful of points collected along an optimisation path is accurate over a bounded region, and a solver working at a larger scale than that region will be misled no matter how well the model fits its training data.

The contributions are these.

\begin{enumerate}[leftmargin=1.6em,itemsep=1pt,topsep=2pt]
\item \textbf{Three conditions, each shown necessary.} A learned local model improves the
optimisation only when its role is assistance rather than replacement, the working radius
lies inside the region where it generalises, and the base method has budget left. Two we
fail on purpose and watch the benefit go: replacement drops below the base method it was
meant to help, $65$ against $67$; a base method with no room left gains nothing. The third
we establish at the level of approximation.

\item \textbf{A measured effect and a measured boundary.} Over $117$ instances safeguarded
assistance lifts its base method from $67$ to $84$ instances solved to high accuracy, a
sixth of the benchmark, and the margin grows as the tolerance tightens. Released Py-BOBYQA
solves $103$ and ends two orders of magnitude lower at the median, so the effect is real and
does not reach the state of the art. Extended Wood is the exception, and we report it as an
existence result rather than a rule.

\item \textbf{ARAS, with a convergence theorem tied to a computable gate.} The blended
direction is provably descent and the method provably inherits the base method's
$\mathcal{O}(n\epsilon^{-2})$ complexity once the blending weight is bounded by
$(1-\theta)/(1+\kappa_k)$, the gate statistic the framework already computes. Running it
shows the blended step costs more than it returns, which is what the calibration says from
the other side, so the surviving instance of the framework is the one we benchmark.

\item \textbf{Radius-aware local generalisation}, with elementary uniform error bounds
connecting sample coverage and training error to the fully linear condition.

\item \textbf{Three self-corrections reported rather than buried:} a crossover at $n=64$
that was a defect in our own interpolation code, two geometric diagnostics we proposed and
refuted, and a network setting used throughout an earlier version that proves undertrained,
worth nine instances of $117$.

\item \textbf{A warning about reference sets.} Scoring the surrogate variants against each
other puts assistance ahead of a model-based method; adding a released solver reverses the
ordering. A Mor\'e--Wild comparison that omits released software measures its own pool.
\end{enumerate}

One caveat belongs in front of all of this rather than buried in the experimental section.
This is a mechanistic study, not a solver paper. Every variant we compare shares one
network class, one training pipeline and one base method, precisely so that a difference
between two of them can be attributed to the mechanism we varied and not to an
implementation detail; the price of that design is that none of the variants is tuned to
be competitive. That design does not, however, excuse us from asking whether the surrogate's benefit
merely repairs a weak base method, so Section~\ref{sec:exp_role} adds a model-based
trust-region solver of the \textsc{newuoa} type \cite{Powell2006NEWUOA,Conn2009DFO} and
answers the question directly. It largely does: the strong solver alone outperforms
assistance on the finite-difference method, and attaching the same surrogate to the strong
solver adds nothing. We report this because it sharpens the claim rather than weakening
it. What a surrogate supplies is progress the base method cannot make on its own, so its
value is measured by the gap it closes, not by the model's accuracy.

Two scope decisions follow. Evaluations, not seconds, are the currency: one evaluation is a
model run in the setting these experiments are about, so the wall-clock figures we report
come from an unoptimised pure-NumPy implementation and locate a break-even oracle cost
rather than rank algorithms. And the dimensions are small on purpose: isolating one
mechanism at a time means running every variant on every instance, which at $n\le16$ we can
do exhaustively and at $n=128$ once.

Sections~\ref{sec:background}--\ref{sec:arch} reread the two source studies through the role
distinction, Section~\ref{sec:generalization} makes the radius precise,
Section~\ref{sec:algdesign} gives the embeddings and ARAS,
Section~\ref{sec:experiments} the experiments, and
Sections~\ref{sec:discussion}--\ref{sec:conclusion} what follows from them.

\section{Background and Problem Setting}\label{sec:background}

We consider the unconstrained black-box optimization problem
\begin{equation}\label{eq:sec2_problem}
    \min_{x\in\mathbb{R}^n} f(x),
\end{equation}
where the objective function is expensive to evaluate and derivative information is unavailable, unreliable, or too costly to compute. In this regime, the central question is not only how to build a local approximation of $f$, but also how that approximation is embedded into the optimization loop. This distinction is especially important for neural-network surrogates: a model that fits local data well may still fail to improve the optimization method if it is used in a fragile way.

\subsection{Model-Based Optimization}

A standard approach in derivative-free optimization is to build, at iteration $k$, a local model $m_k$ of $f$ around the current iterate $x_k$ and to use this model to compute a trial step. In trust-region methods, one typically solves a subproblem of the form
\begin{equation}\label{eq:tr_subproblem_sec2}
    \min_{\|s\|\leq \Delta_k} m_k(x_k+s),
\end{equation}
where $\Delta_k>0$ is the trust-region radius. The trial point is then accepted or rejected by comparing the actual decrease in $f$ with the decrease predicted by $m_k$. This framework separates two issues: \emph{model quality} and \emph{acceptance control}. The model should be accurate on $\Ball{x_k}{\Delta_k}$, while the acceptance test prevents the algorithm from over-trusting the model when the approximation is poor.

In classical model-based DFO, $m_k$ is often chosen from polynomial or radial-basis-function families. These models are attractive because they admit explicit interpolation or regression constructions and can be analyzed through sampling geometry and trust-region theory. In finite-difference-based methods, the role of a model is different but closely related: the method still constructs local information from function evaluations, often through approximate gradients, and then applies a first-order update or a quasi-Newton step. The standard forward-difference estimator is
\begin{equation}\label{eq:fd_forward}
    [g_h(x)]_i := \frac{f(x+h e_i)-f(x)}{h},\qquad i=1,\dots,n,
\end{equation}
which costs $n+1$ evaluations per gradient and whose stepsize $h$ must balance truncation error against noise amplification.

For the purposes of this paper, it is useful to distinguish two algorithmic roles for a learned surrogate.
\begin{itemize}
    \item \textbf{Replacement:} the surrogate directly substitutes a core quantity used by the base method, such as a gradient, a search direction, or a local model minimized by the algorithm.
    \item \textbf{Assistance:} the surrogate is used only to propose additional candidate points or cheap exploratory steps, while the true objective remains responsible for final acceptance.
\end{itemize}
This distinction clarifies the contrast between the two primary reference papers. Giovannelli et al.\ study a replacement mechanism, where the surrogate gradient is injected into a finite-difference BFGS-type step. Taminiau et al.\ study an assistance mechanism, where a surrogate trained from the accumulated dataset is optimized cheaply to generate extra candidates, but these candidates are retained only when the true objective decreases sufficiently. The inferential status of this reading should be stated plainly, since the rest of the paper rests on it. The two
studies differ in their base method, their architecture, their training protocol, their
data policy and their test set as well as in role, and the supplementary document
tabulates those differences; a pair of observational studies differing in six ways cannot
establish which one matters. What the pair supplies is a hypothesis. What
Section~\ref{sec:experiments} supplies is a test of it, holding the network class, the
training pipeline and the base method fixed and varying role alone.

The data available to a learned surrogate typically come from the optimization trajectory itself. If
\[
F_k=\{(y_i,f(y_i))\}_{i=1}^{N_k}
\]
collects previously evaluated points and function values, then a value-based surrogate $m_\theta$ may be trained by minimizing an empirical loss over $F_k$. When approximate gradients are also available, one can augment the dataset with
\[
G_k=\{(z_j,g(z_j))\}_{j=1}^{M_k}, \qquad g(z_j)\approx \nabla f(z_j),
\]
and train the surrogate through a Sobolev-type objective of the form
\begin{equation}\label{eq:sobolev_loss_sec2}
L_{F_k,G_k}(\theta)
:=
\frac{1}{N_k}\sum_{i=1}^{N_k}\bigl(m_\theta(y_i)-f(y_i)\bigr)^2
+
\frac{1}{M_k}\sum_{j=1}^{M_k}\|\nabla m_\theta(z_j)-g(z_j)\|^2
+\lambda\|\theta\|^2.
\end{equation}
This formulation is important for our discussion because it makes the surrogate both a value approximator and a device that shapes gradients. In particular, once a differentiable model $m_\theta$ has been trained, one can obtain cheap gradient-based steps from $\nabla m_\theta$ without incurring additional evaluations of $f$.

\subsection{Fully Linear and Fully Quadratic Models}\label{sec:fullylinear}

The standard language for local model quality in trust-region DFO is given by the notions of fully linear and fully quadratic models. These are \emph{uniform} approximation properties on a neighborhood, not merely pointwise fitting conditions.

\begin{definition}[Fully linear model]
A model $m_k$ is said to be \emph{fully linear} on $\Ball{x_k}{\Delta_k}$ if there exist constants $\kappa_f,\kappa_g>0$, independent of $k$, such that for all $x\in\Ball{x_k}{\Delta_k}$,
\begin{equation}\label{eq:fully_linear_val_sec2}
    |f(x)-m_k(x)| \le \kappa_f\Delta_k^2,
\end{equation}
and
\begin{equation}\label{eq:fully_linear_grad_sec2}
    \|\nabla f(x)-\nabla m_k(x)\| \le \kappa_g\Delta_k.
\end{equation}
\end{definition}

\begin{definition}[Fully quadratic model]
Assume that $f$ is twice continuously differentiable on $\Ball{x_k}{\Delta_k}$. A model $m_k$ is said to be \emph{fully quadratic} on $\Ball{x_k}{\Delta_k}$ if there exist constants $\kappa_f,\kappa_g,\kappa_H>0$, independent of $k$, such that for all $x\in\Ball{x_k}{\Delta_k}$,
\begin{equation}\label{eq:fully_quadratic_val_sec2}
    |f(x)-m_k(x)| \le \kappa_f\Delta_k^3,
\end{equation}
\begin{equation}\label{eq:fully_quadratic_grad_sec2}
    \|\nabla f(x)-\nabla m_k(x)\| \le \kappa_g\Delta_k^2,
\end{equation}
and
\begin{equation}\label{eq:fully_quadratic_hess_sec2}
    \|\nabla^2 f(x)-\nabla^2 m_k(x)\| \le \kappa_H\Delta_k.
\end{equation}
\end{definition}

These conditions explain why trust-region methods can attach rigorous meaning to the phrase ``good local model.'' They guarantee that minimizing $m_k$ inside the trust region gives useful information about the behavior of $f$ in that same region. They also show why local approximation in optimization is stronger than ordinary regression accuracy: what matters is not just that the surrogate fits sampled values, but that it does so uniformly over the region and with the right derivative accuracy.

This viewpoint is particularly helpful for understanding learned surrogates. In a replacement strategy, the method acts directly on surrogate derivatives or on steps computed from the surrogate, so errors in gradients or curvature information can immediately affect search directions, quasi-Newton updates, and globalization logic. In that case, fully linear or fully quadratic behavior is the natural benchmark. In an assistance strategy, by contrast, the surrogate is filtered through an acceptance test based on the true objective, so the algorithm can benefit from a useful but imperfect model without requiring the same degree of uniform reliability at every iteration. This is one of the main conceptual reasons why assistance can be substantially more robust than replacement.

\subsection{Limitations of Classical Polynomial Models}

Classical polynomial models remain the theoretical backbone of model-based DFO, but they face several structural limitations in the settings that motivate learned surrogates.

First, their parameter count grows rapidly with dimension. A full quadratic model in $n$ variables has
\[
\frac{(n+1)(n+2)}{2}
\]
coefficients, so building and updating such a model becomes increasingly expensive as $n$ grows. This is not only a storage issue; it also affects the number of sample points required for stable interpolation or regression and the cost of the associated linear algebra.

Second, polynomial models are highly sensitive to sampling geometry. Their quality depends on the sample set being sufficiently well poised. When the available points are clustered, nearly collinear, or concentrated on a low-dimensional subset of the trust region, the resulting interpolation or regression system can become ill-conditioned. In practical algorithms, this situation occurs naturally because data are generated by the optimization path rather than by a carefully designed experimental design. For example, points produced by backtracking line search often lie approximately on a single ray, which yields poor coverage of orthogonal directions.

Third, the data available in modern black-box optimization are often noisy, heterogeneous, and reused across iterations. In such cases, a rigid polynomial basis may fail to exploit repeated structure effectively. Neural networks, in contrast, can absorb oversampled or irregular datasets more flexibly, and once trained they provide smooth, cheap-to-evaluate gradients. However, this flexibility should not be confused with automatic algorithmic improvement. Giovannelli et al.\ show that even when neural surrogates produce competitive value and gradient approximations, directly replacing the finite-difference gradient in a sensitive quasi-Newton framework does not reliably improve optimization performance. This suggests that the main difficulty is not merely approximation in isolation, but the interaction between approximation error, data geometry, and the update mechanism of the base solver.

These observations motivate the perspective adopted in the rest of the paper. We do not ask only whether a neural network can approximate a local objective well. Instead, we ask a more optimization-relevant question: \emph{when does a learned local model provide the type of information that the surrounding algorithm can safely exploit?} The answer depends jointly on local coverage, training objectives, smoothness of the surrogate, and crucially, whether the surrogate is used to replace a core step or merely to assist the search.

\section{Neural Networks for Local Approximation}\label{sec:nn}

\subsection{Motivation and Advantages}\label{sec:nn_motivation}
Two mechanisms explain why NN surrogates can improve local approximation in optimization. In terms of the central claim of Section~\ref{sec:intro}, these mechanisms explain why NNs \emph{can} help; whether they \emph{do} help is decided by the embedding role and the region radius studied in Sections~7--9.

\paragraph{Smoothing and reduced oscillations.}
Finite-difference gradients exhibit a fundamental bias--variance trade-off. Under smoothness, forward-difference truncation error is $O(h)$; under noise, variance scales as $O(h^{-2})$. A basic model clarifies this. Suppose we observe $\tilde f(x)=f(x)+\xi(x)$ where $\mathbb{E}[\xi]=0$ and $\mathrm{Var}(\xi)=\sigma^2$. The forward-difference estimator obeys (approximately)
\begin{equation}\label{eq:fd_noise_var}
\mathrm{Var}\big([\tilde g_h(x)]_i\big)
=\mathrm{Var}\left(\frac{\xi(x+h e_i)-\xi(x)}{h}\right)
\approx \frac{2\sigma^2}{h^2}.
\end{equation}
Thus reducing truncation error by using smaller $h$ increases noise variance dramatically. Learned surrogates can smooth because they fit a global (or at least region-wide) function class to many data points, implicitly averaging noise and producing a more stable gradient field.

\paragraph{Trend prediction and directional guidance.}
Beyond smoothing, the surrogate can infer a coherent descent direction (trend) from scattered samples. This is especially plausible when the training objective includes derivative targets: Sobolev training matches both values and gradients, improving the surrogate's local directional fidelity \cite{Czarnecki2017Sobolev,Taminiau2025Arxiv2502}. In Taminiau et al.\ the surrogate is designed to be continuously differentiable and is used via gradient descent with an Armijo-like condition, directly exploiting this directional information \cite{Taminiau2025Arxiv2502}.

\paragraph{Why these mechanisms are not automatic.}
Giovannelli et al.\ emphasize a key caveat: high-quality approximation (especially in function values or gradients) might not improve optimization if the surrogate is inserted into a sensitive part of the algorithmic loop. In particular, they report that using surrogate gradients (including NNs) inside a practical FD-BFGS step within FLE (the full-low evaluation framework of \cite{BerahasSohabVicente2023}) does not reliably improve performance across CUTEst problems \cite{GouldOrbanToint2015,Giovannelli23T027}. This motivates careful embedding and safeguards.

\subsection{Approximation Properties and Smoothness Conditions}
In optimization, we usually need surrogate derivatives. This leads to explicit smoothness constraints.

\begin{assumption}[Differentiable surrogate family]\label{ass:diff_surrogate}
The surrogate $m_\theta:\R^n\to\R$ is continuously differentiable for all $\theta$ produced by training.
\end{assumption}

Taminiau et al.\ exclude ReLU specifically because it does not yield a differentiable model in their setting and focus on smooth activations\cite{Taminiau2025Arxiv2502}. Giovannelli et al.\ compare ReLU, ELU, SiLU, Sigmoid, and Tanh, and identify SiLU as the best-performing activation in their approximation benchmarks \cite{Giovannelli23T027}.

\paragraph{First-order vs second-order approximation.}
Giovannelli et al.\ show that interpolation/regression models often provide better accuracy for second-order approximations, while NNs can be competitive for zero- and first-order approximations at high training cost \cite{Giovannelli23T027}. This is important because trust-region convergence theory relies on fully linear/quadratic properties (Section~\ref{sec:fullylinear}). If an algorithm requires accurate Hessians (fully quadratic), NN surrogates may require additional structure (e.g.\ special architectures, regularization targeting second derivatives, or explicit curvature information in training); otherwise, they may be better suited as first-order tools within safeguarded frameworks.

\subsection{Comparison with Polynomial Models}
Giovannelli et al.\ interpret activation functions as nonlinear feature maps and propose activation-enriched polynomial bases. Two representative constructions (written here in a compact ``feature list'' form) replace quadratic cross terms with activated nonlinear features \cite{Giovannelli23T027}:
\begin{align}
\tilde{\phi}(x) &= \Big\{1,\ x_1,\dots,x_n,\ x_1^2/2,\ s(x_1x_2),\dots,s(x_{n-1}x_n),\ x_n^2/2\Big\}, \label{eq:phi_tilde}\\
\hat{\phi}(x) &= \Big\{1,\ x_1,\dots,x_n,\ s(x_1),\dots,s(x_n),\ x_1^2/2,\dots,x_n^2/2\Big\}, \label{eq:phi_hat}
\end{align}
where $s(\cdot)$ is an activation function. Their findings suggest that such enrichment can ``waive the necessity'' of including cross terms in some cases, yielding fewer parameters while maintaining approximation quality \cite{Giovannelli23T027}. From a DFO viewpoint, this is a valuable insight: ML-inspired feature design can create models that sit between classical polynomials and full NNs, potentially balancing sample efficiency and stability.

Taminiau et al.\ also compare NNs with RBF surrogates trained by the same Sobolev objective (gradient term for RBF reduces to least squares); they set $\lambda=0$ for RBF and solve by a direct least-squares solver, choosing the minimum-norm solution when non-unique \cite{Taminiau2025Arxiv2502}. This highlights that ``ML helps'' need not mean ``NNs are best'': the key is the training objective and the embedding into a safeguarded loop.

\section{Training Data Selection for Local Models}\label{sec:data}

\subsection{Sampling Strategies in the Trust Region}
The surrogate model is only as good as its data. In model-based DFO, a common paradigm is to sample inside $\Ball{x_k}{\Delta_k}$ in a way that ensures geometry and stability \cite{Conn2009DFO,ConnScheinbergVicente2006TR}. Giovannelli et al.\ sample training and testing datasets uniformly in a ball $\Ball{x_0}{1}$ around a CUTEst-provided initial point $x_0$ (for approximation evaluation) \cite{Giovannelli23T027}. They also normalize data by shifting by $-x_0$ and scaling by $\Delta=\max_i\norm{x_i-x_0}$ so the data lie in $\Ball{0}{1}$ \cite{Giovannelli23T027}.

Taminiau et al.\ derive data from the base method itself: each outer iteration adds a function-value point to $F$ and adds a finite-difference gradient target to $G$ at the current iterate \cite{Taminiau2025Arxiv2502}. This is a typical ``DFO-as-data-generator'' loop: the algorithm's exploration naturally populates the region of interest, and the learned surrogate is updated online.

\paragraph{Practical sampling variants (recommended).}
In addition to uniform sampling, we recommend considering:
(i) \emph{diversity-aware sampling} (maximize minimum distances) to improve coverage, and 
(ii) \emph{poisedness-improving repairs} for polynomial/RBF surrogates \cite{Conn2009DFO}. 
When using NNs, diversity-aware sampling can reduce redundancy and improve generalization within a trust region.

\subsection{Geometry and Poisedness of Sample Sets}
For polynomial interpolation, sample geometry is formalized by poisedness \cite{Conn2009DFO}. We use a concise version.

\begin{definition}[$\Lambda$-poisedness (informal)]\label{def:poised}
Let $\mathcal{P}$ be a polynomial model space and let $Y=\{y_0,\dots,y_p\}\subset\Ball{x}{\Delta}$. The set $Y$ is \emph{$\Lambda$-poised} in $\Ball{x}{\Delta}$ if the interpolation system is well-conditioned and the corresponding Lagrange polynomials are uniformly bounded by $\Lambda$ on the ball \cite{Conn2009DFO}.
\end{definition}

Poisedness supports fully linear/quadratic guarantees for polynomial models. NNs lack an analogous linear-system criterion, but geometry is still decisive: if samples lie near a low-dimensional set (e.g.\ mostly along a line-search ray), then the learner's coverage in orthogonal directions is poor, often limiting its local generalization radius.

Giovannelli et al.\ explicitly discuss this issue for their surrogate-enhanced FD-BFGS step: line-search points in Armijo backtracking are aligned, which is problematic for interpolation/regression models. They therefore add only the first line-search point to the dataset for interpolation/regression surrogates, while for NN surrogates they add all such points \cite{Giovannelli23T027}. This motivates explicit redundancy diagnostics.

\subsection{Exploration--Accuracy Trade-offs}\label{sec:tradeoffs}
Trust regions frame the key trade-off: small $\Delta$ improves local approximation; large $\Delta$ supports exploration but may exceed the surrogate's reliable range.

To operationalize this trade-off, we recommend coupling the trust-region radius (or stepsize scale in FD methods) with surrogate diagnostics. In Section~\ref{sec:generalization} we define an effective generalization radius; in Section~\ref{sec:experiments} we estimate it empirically through region-size sweeps.

\paragraph{Redundancy diagnostics, and what became of them.}
An earlier version of this work proposed two cheap geometric diagnostics for redundancy in
the training data, an overlap ratio between consecutive training sets and a step-spacing
statistic along the backtracking ray, and built the framework of Section~\ref{sec:aras}
around them. Section~\ref{sec:exp_diag} tests both against $3202$ outer iterations and finds
that neither predicts whether a surrogate proposal will be accepted, as does a
singular-value diagnostic for directional coverage. We therefore gate on measured model
quality instead, through the statistics of \eqref{eq:aras_gate}. The definitions and the
full refutation are given in the supplementary document, since a reader building such a
method would otherwise reach for the same quantities; what the main text needs from this
subsection is only that the redundancy question was asked, answered negatively, and
replaced.

\section{Offline and Online Learning Strategies}\label{sec:learning}

\subsection{Offline Pre-trained Models}
Offline pre-training is attractive when the optimization problems arise from a task family with shared structure (e.g.\ repeated design optimization, parametric PDEs). In such settings, one can learn transferable representations and reuse them across runs. Neither primary source assumes such a distributional setting: Giovannelli et al.\ train surrogates per test problem on locally sampled datasets \cite{Giovannelli23T027}, and Taminiau et al.\ train and update surrogates online within a single run \cite{Taminiau2025Arxiv2502}.

Therefore, within the scope of our synthesis, offline models should be treated as:
(i) initializations (warm starts), or
(ii) priors/regularizers that bias the online fit toward smoothness.

A caveat is that offline generalization across objectives is fundamentally different from local interpolation on a fixed objective, and it should be tested explicitly (Section~\ref{sec:noise}); a systematic cross-task study is beyond the scope of this paper.

\subsection{Online Adaptive Training}
Online learning updates the surrogate as new points are evaluated.

\paragraph{Giovannelli et al.\ (approximation pipeline).}
For approximation studies, Giovannelli et al.\ train NN surrogates by minimizing a mean-squared error over a training dataset $D$ sampled in a ball and evaluate on a separate testing dataset sampled in the same ball. They train five times per problem to compare activations and report that SiLU achieves the best performance across their benchmark suite \cite{Giovannelli23T027}. They use a feedforward network with 2 hidden layers and $4n$ neurons per layer, train it with Adam for 300 epochs, and use ReduceLROnPlateau (factor 0.8, patience 15) based on the \emph{testing} empirical risk \cite{Giovannelli23T027}. These details matter because the surrogate's approximation quality is tied to training protocol as much as to architecture.

\paragraph{Giovannelli et al.\ (online within FLE-S).}
In their surrogate-enhanced FD-BFGS step (FLE-S), they train an NN surrogate on the accumulated dataset $D_k$ once $\abs{D_k}$ exceeds a threshold, and they use a very small number of epochs to limit overhead: 5 epochs at the first surrogate call, then 1 epoch in subsequent calls \cite{Giovannelli23T027}. They choose learning rates by grid search and set $\zeta$ (the threshold multiplier) differently for polynomial/RBF vs NN surrogates \cite{Giovannelli23T027}. These choices embody a core online-learning principle: amortize training, limit per-iteration cost, and accept that the surrogate may be imperfect.

\paragraph{Taminiau et al.\ (online, warm-start, capped).}
Taminiau et al.\ train a shallow NN surrogate with one hidden layer of width $5n$ and smooth activations, solve the training problem by L-BFGS, and warm-start subsequent trainings from previous parameters \cite{Taminiau2025Arxiv2502}. To limit cost and prevent ill-conditioning, they cap dataset sizes: $N\le 10(n+1)$ and $M\le 10$, removing the oldest points first if thresholds are exceeded \cite{Taminiau2025Arxiv2502}. These design choices are tightly coupled to their safeguarded-assistance embedding.

\subsection{Hybrid Learning Strategies}
A hybrid strategy can combine offline priors with online guarantees:
(i) pretrain a feature extractor on a task family (if available),
(ii) normalize and fine-tune online in local regions,
(iii) accept surrogate steps only when the true objective decreases sufficiently (assistance with safeguards).
This approach is conceptually aligned with the framework of \cite{Taminiau2025Arxiv2502}.

\section{Model Architecture and Complexity}\label{sec:arch}

\subsection{Neural Network Structures for Local Modeling}
We consider feedforward NNs. A one-hidden-layer (shallow) model is
\begin{equation}\label{eq:shallowNN}
m_\theta(x)=W_2\,\phi(W_1 x+b_1)+b_2,
\end{equation}
with $W_1\in\R^{q\times n}$, $b_1\in\R^q$, $W_2\in\R^{1\times q}$, $b_2\in\R$, and elementwise activation $\phi$. Taminiau et al.\ use $q=5n$ \cite{Taminiau2025Arxiv2502}. Giovannelli et al.\ use a 2-hidden-layer network with width $4n$ per layer in their approximation experiments \cite{Giovannelli23T027}.

\subsection{Model Size and Approximation Capacity}
Model size controls expressivity and training cost. The shallow model \eqref{eq:shallowNN} has
\begin{equation}\label{eq:param_shallow}
n_w = qn + q + q + 1 = q(n+2)+1
\end{equation}
parameters. With $q=5n$, $n_w\approx 5n^2+10n+1$. A two-hidden-layer network with widths $q_1=q_2=4n$ has
\begin{equation}\label{eq:param_deep}
n_w = (q_1 n + q_1) + (q_2 q_1 + q_2) + (q_2 + 1) \approx 20n^2+12n+1.
\end{equation}
Thus a deeper/wider architecture has substantially higher parameter count, potentially improving approximation at the cost of larger datasets and higher training time.

\paragraph{Complexity relative to oracle cost.}
In DFO, the relevant question is not training cost in isolation but training cost \emph{relative to the cost of a function evaluation}. If evaluations are extremely expensive (e.g.\ PDE solves), expensive training may still be worthwhile. In many numerical benchmarks, however, evaluation costs are moderate and training overhead can dominate. Giovannelli et al.\ emphasize that NN surrogates may be competitive in function-evaluation counts but at high training costs \cite{Giovannelli23T027}; conversely, Taminiau et al.\ explicitly restrict model size and dataset size to keep training overhead bounded \cite{Taminiau2025Arxiv2502}. This motivates reporting both evaluation counts and wall-clock time (Section~\ref{sec:experiments}).

\subsection{Regularization and Stability}
Regularization stabilizes training in low-data regimes:
\begin{equation}\label{eq:weight_decay}
\lambda\norm{\theta}_2^2.
\end{equation}
Giovannelli et al.\ also normalize data via shifting/scaling into $\Ball{0}{1}$, which improves numerical conditioning and can mitigate training pathologies (e.g.\ vanishing gradients) \cite{Giovannelli23T027}. Taminiau et al.\ use weight decay $\lambda=10^{-4}$ and employ Sobolev training, which introduces a gradient-mismatch penalty and can be interpreted as a curvature (diagonal Hessian) regularizer under finite differences (Section~\ref{sec:noise}) \cite{Taminiau2025Arxiv2502}.

\section{Generalization and Error Analysis}\label{sec:generalization}

\subsection{Local Generalization in Trust Regions}
This section develops the second half of our central claim: surrogate trustworthiness is a \emph{radius} question. We formalize a radius-aware notion of generalization suited to local modeling.

\begin{definition}[Local uniform error profiles]\label{def:error_profiles}
Given $x\in\R^n$ and radius $\Delta>0$, define
\begin{equation}\label{eq:err_profiles}
e_f(x,\Delta):=\sup_{u\in\Ball{x}{\Delta}}\abs{f(u)-m(u)},\qquad
e_g(x,\Delta):=\sup_{u\in\Ball{x}{\Delta}}\norm{\grad f(u)-\grad m(u)}_2.
\end{equation}
\end{definition}

\begin{definition}[Effective local generalization radius]\label{def:rgen}
Fix a floor $\epsilon_{g,0}\ge0$ and a rate $\kappa_g>0$, and set
$\bar e_g(\delta):=\epsilon_{g,0}+\kappa_g\delta$, with $\bar e_f$ defined analogously from
$\epsilon_{f,0}$ and $\kappa_f\delta^2$. The \emph{effective generalization radius} around
$x$ is
\begin{equation}\label{eq:rgen}
r_{\mathrm{gen}}(x) := \sup\bigl\{\Delta>0:\ e_f(x,\delta)\le \bar e_f(\delta)\ \text{and}\ e_g(x,\delta)\le \bar e_g(\delta)\ \text{ for every } 0<\delta\le\Delta \bigr\}.
\end{equation}
\end{definition}

The floor is not cosmetic. With $\epsilon_{g,0}=0$ the tolerance $\kappa_g\delta$ vanishes as
$\delta\to0$ while $e_g(x,\cdot)$ is nondecreasing and, for a trained network, bounded below
by its own training error, so any surrogate with nonzero error at $x$ fails the test at every
small enough $\delta$ and has $r_{\mathrm{gen}}(x)=0$ exactly. That is not an edge case here:
Section~\ref{sec:exp_region} measures the floor and finds it does not vanish. Against the
classical fully linear standard, which is $\epsilon_{g,0}=0$, the radius of every network we
fit is therefore zero, and the content of that section is how large $\epsilon_{g,0}$ must be
before the radius becomes positive and useful. Quantifying over all $\delta\le\Delta$ is what
makes the qualifying set an interval: both sides increase with $\delta$, so testing at
$\Delta$ alone would admit a surrogate that passes there and fails on a smaller ball
inside.

Defining $r_{\mathrm{gen}}$ makes explicit what is often implicit in surrogate-based optimization: a surrogate may be ``good enough'' only within a limited region. Giovannelli et al.\ report that reducing ball radii can improve approximation accuracy \cite{Giovannelli23T027}, consistent with finite $r_{\mathrm{gen}}$ that depends on curvature and sample coverage.

\subsection{Error Bounds for Neural Approximation}\label{sec:errorbounds}
The two propositions below connect training accuracy and sample coverage to uniform errors. Neither uses anything specific to neural networks: the first is Lipschitz continuity and a fill distance combined by the triangle inequality, the second the standard inexact-gradient descent condition. We offer them as an elementary analytical interpretation of what the experiments measure, not as a theory of learned models, and we do not verify that a trained network satisfies their hypotheses.

\begin{assumption}[Smoothness]\label{ass:smoothness_fg}
$f$ is continuously differentiable on $\Ball{x}{\Delta}$ and $\grad f$ is $L_f$-Lipschitz on $\Ball{x}{\Delta}$. The surrogate $m$ is continuously differentiable and $\grad m$ is $L_m$-Lipschitz on $\Ball{x}{\Delta}$.
\end{assumption}

\begin{assumption}[Sample coverage via fill distance]\label{ass:filldist}
A sample set $Y\subset\Ball{x}{\Delta}$ satisfies
\begin{equation}\label{eq:filldist}
h(Y;\Ball{x}{\Delta}) := \sup_{u\in\Ball{x}{\Delta}} \min_{y\in Y} \norm{u-y} \le \varepsilon.
\end{equation}
\end{assumption}

\begin{assumption}[Pointwise training accuracy]\label{ass:pointwise_train}
Training yields pointwise bounds on $Y$:
\begin{equation}\label{eq:pointwise_bounds}
\max_{y\in Y}\abs{f(y)-m(y)}\le \epsilon_f,\qquad
\max_{y\in Y}\norm{\grad f(y)-\grad m(y)}\le \epsilon_g.
\end{equation}
\end{assumption}

\begin{remark}[On the strength of the assumptions]\label{rem:assumptions}
Assumptions~\ref{ass:smoothness_fg}--\ref{ass:pointwise_train} are strong. Lipschitz constants of $\grad m$ for a trained network are rarely available in closed form, pointwise training errors are only observed on the sample set, and certifying a fill distance \eqref{eq:filldist} is expensive beyond a few dimensions. We therefore use the following propositions as \emph{analytical tools} rather than verifiable certificates: they identify which quantities (coverage $\varepsilon$, training errors $\epsilon_f,\epsilon_g$, smoothness constants $L_f,L_m$) control uniform model quality, they explain the empirical radius effect observed in Section~\ref{sec:experiments}, and they motivate the diagnostics of Section~\ref{sec:tradeoffs}. None of the algorithms discussed in this paper require these assumptions in order to run.
\end{remark}

\begin{proposition}[Uniform gradient error bound]\label{prop:uniform_grad}
Under Assumptions~\ref{ass:smoothness_fg}--\ref{ass:pointwise_train},
\begin{equation}\label{eq:uniform_grad_bound}
e_g(x,\Delta)\le \epsilon_g + (L_f+L_m)\varepsilon.
\end{equation}
\end{proposition}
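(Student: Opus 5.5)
Fix an arbitrary $u\in\Ball{x}{\Delta}$, bound $\norm{\grad f(u)-\grad m(u)}$ by a quantity independent of $u$, and take the supremum over $u$ as in Definition~\ref{def:error_profiles}. By the fill-distance hypothesis \eqref{eq:filldist} of Assumption~\ref{ass:filldist}, there is a sample $y\in Y$ with $\norm{u-y}\le\varepsilon$. When $Y$ is finite (the relevant case, since it is a training set) the minimum in \eqref{eq:filldist} is attained. If $Y$ were only assumed nonempty and arbitrary, I would pick $y$ with $\norm{u-y}\le\varepsilon+\eta$ for some $\eta>0$ and let $\eta\to0$ at the end. This limiting step is the only real technicality.

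The key step is a three-term triangle-inequality split, pivoting through $y$:
\begin{equation*}
\norm{\grad f(u)-\grad m(u)}\le \norm{\grad f(u)-\grad f(y)}+\norm{\grad f(y)-\grad m(y)}+\norm{\grad m(y)-\grad m(u)}.
\end{equation*}
I bound the terms in order:
\begin{itemize}
\item The middle term is at most $\epsilon_g$ by \eqref{eq:pointwise_bounds} of Assumption~\ref{ass:pointwise_train}, since $y\in Y$.
\item The first term is at most $L_f\norm{u-y}\le L_f\varepsilon$, because both $u$ and $y$ lie in $\Ball{x}{\Delta}$, where Assumption~\ref{ass:smoothness_fg} gives that $\grad f$ is $L_f$-Lipschitz.
\item The third term is at most $L_m\varepsilon$ by the same argument for $\grad m$.
\end{itemize}
Summing gives $\norm{\grad f(u)-\grad m(u)}\le\epsilon_g+(L_f+L_m)\varepsilon$ for every $u$ in the ball. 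Taking the supremum yields \eqref{eq:uniform_grad_bound}.

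I expect no genuine obstacle. The one point to check is that the Lipschitz hypotheses are stated on the same ball that contains both $u$ and $Y$, so no extension of $f$ or $m$ outside $\Ball{x}{\Delta}$ is needed. This holds because Assumption~\ref{ass:filldist} places $Y\subset\Ball{x}{\Delta}$.
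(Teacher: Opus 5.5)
Your proof is correct and follows the paper's argument: pick $y\in Y$ with $\norm{u-y}\le\varepsilon$, split $\norm{\grad f(u)-\grad m(u)}$ by the triangle inequality through $y$, and bound the three pieces by $L_f\varepsilon$, $\epsilon_g$ and $L_m\varepsilon$. Your remarks on attainment of the minimum and on the Lipschitz hypotheses holding on the ball that contains $Y$ are refinements the paper leaves implicit.
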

\begin{proof}
Fix $u\in\Ball{x}{\Delta}$ and choose $y\in Y$ with $\norm{u-y}\le \varepsilon$. Then
\[
\norm{\grad f(u)-\grad m(u)}
\le \norm{\grad f(u)-\grad f(y)}+\norm{\grad f(y)-\grad m(y)}+\norm{\grad m(y)-\grad m(u)}.
\]
Apply Lipschitz bounds and Assumption~\ref{ass:pointwise_train}.
\end{proof}

\begin{proposition}[Descent direction under inexact gradients]\label{prop:descent_inexact}
Let $g_m(x)=\grad m(x)$. If $\norm{g_m(x)-\grad f(x)}\le \epsilon_g$, then
\begin{equation}\label{eq:descent_cond}
\ip{\grad f(x)}{-g_m(x)} \le -\norm{\grad f(x)}_2\big(\norm{\grad f(x)}_2-\epsilon_g\big).
\end{equation}
In particular, if $\norm{\grad f(x)}_2>\epsilon_g$, then $-g_m(x)$ is a strict descent direction for $f$ at $x$.
\end{proposition}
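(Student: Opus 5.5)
The plan is to decompose the surrogate gradient as the true gradient plus an error term and bound the cross term with Cauchy--Schwarz. Write $g:=\grad f(x)$ and $e:=g_m(x)-\grad f(x)$, so that $g_m(x)=g+e$ with $\norm{e}_2\le\epsilon_g$ by hypothesis. Expanding the inner product gives
\[
\ip{\grad f(x)}{-g_m(x)} = -\norm{g}_2^2 - \ip{g}{e}.
\]

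The only estimate needed is Cauchy--Schwarz on the cross term: $-\ip{g}{e}\le \norm{g}_2\norm{e}_2\le \norm{g}_2\,\epsilon_g$. Substituting this yields
\[
\ip{\grad f(x)}{-g_m(x)}\le -\norm{g}_2^2+\epsilon_g\norm{g}_2 = -\norm{g}_2\bigl(\norm{g}_2-\epsilon_g\bigr),
\]
which is exactly \eqref{eq:descent_cond}. No property of $m$ beyond differentiability at $x$ is used, so Assumption~\ref{ass:diff_surrogate} is enough for $g_m(x)$ to be defined.

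For the second claim, suppose $\norm{g}_2>\epsilon_g\ge0$. Then $\norm{g}_2>0$ and $\norm{g}_2-\epsilon_g>0$, so the right-hand side of \eqref{eq:descent_cond} is strictly negative. Hence the directional derivative $\ip{\grad f(x)}{-g_m(x)}$ of $f$ at $x$ along $-g_m(x)$ is strictly negative. Since $f$ is continuously differentiable near $x$, as in Assumption~\ref{ass:smoothness_fg}, a first-order Taylor expansion gives $f(x-t\,g_m(x))<f(x)$ for all sufficiently small $t>0$. This is the definition of a strict descent direction. Note that $g_m(x)\neq0$ automatically here, because otherwise $\norm{g}_2=\norm{e}_2\le\epsilon_g$.

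There is no genuine obstacle in this argument. The one point that needs care is making explicit that strictness needs both factors positive, that is, $\norm{g}_2>0$ and $\norm{g}_2-\epsilon_g>0$, which the hypothesis $\norm{g}_2>\epsilon_g$ supplies.
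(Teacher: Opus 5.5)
Your proof is correct and is the paper's own argument: write $g_m(x)=\grad f(x)+e$, expand $\ip{\grad f(x)}{-g_m(x)}=-\norm{\grad f(x)}^2-\ip{\grad f(x)}{e}$, and bound the cross term by Cauchy--Schwarz. Your extra remarks on strictness (both factors positive, and differentiability of $f$ at $x$ giving actual decrease along $-g_m(x)$) are sound and spell out what the paper leaves implicit.
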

\begin{proof}
Use $\ip{\grad f}{-g_m}=-\norm{\grad f}^2-\ip{\grad f}{g_m-\grad f}$ and Cauchy--Schwarz.
\end{proof}

The $\epsilon_g$ of Proposition~\ref{prop:descent_inexact} is a bound on the gradient error
at the single point $x$, whereas the $\epsilon_g$ of
Assumption~\ref{ass:pointwise_train} bounds it on the training set. Any valid upper bound
may be substituted, and the one we can measure is the uniform error $e_g(x,\Delta)$ of
Definition~\ref{def:error_profiles}, which dominates the pointwise error on the ball; the
substitution is therefore conservative, and it is how Section~\ref{sec:exp_region} reads the
proposition against data.

\begin{remark}[Implication for surrogate usefulness]
Proposition~\ref{prop:descent_inexact} implies a qualitative threshold: if the surrogate gradient error is on the same order as the true gradient norm, then surrogate-based steps may fail to decrease $f$ or may yield erratic behavior. This explains why assistance strategies that test true decrease can be robust even when the surrogate is imperfect.
\end{remark}

\subsection{Robustness under Noise}\label{sec:noise}
Taminiau et al.\ provide a key structural insight: Sobolev learning with finite-difference gradients induces a curvature penalty.

\begin{proposition}[Sobolev learning as curvature regularization]\label{prop:sobolev_curvature}
(Adapted from Proposition~3.1 in \cite{Taminiau2025Arxiv2502}.)
Let $m_\theta:\R^n\to\R$ be twice continuously differentiable. Fix $z\in\R^n$ and $h>0$ such that
\begin{equation}\label{eq:interp_stencil}
m_\theta(z)=f(z),\qquad m_\theta(z+h e_\ell)=f(z+h e_\ell),\ \ell=1,\dots,n.
\end{equation}
Define $g_h(z)$ by forward differences \eqref{eq:fd_forward}. Then there exist $\zeta_1,\dots,\zeta_n\in[0,1]$ such that
\begin{equation}\label{eq:curv_pen}
\norm{g_h(z)-\grad m_\theta(z)}_2^2=\frac{h^2}{4}\sum_{\ell=1}^n\left([\hess m_\theta(z+h\zeta_\ell e_\ell)]_{\ell\ell}\right)^2.
\end{equation}
\end{proposition}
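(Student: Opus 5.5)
The plan is to work coordinate by coordinate, using the interpolation conditions \eqref{eq:interp_stencil} to turn each forward difference of $f$ into a forward difference of $m_\theta$. Then a one-dimensional Taylor expansion with Lagrange remainder should give an exact expression for each component of the error; summing the squares gives \eqref{eq:curv_pen}. Nothing about $f$ beyond its values at the $n+1$ stencil points is needed, so no smoothness of $f$ is assumed.

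\emph{Step 1 (transfer to the surrogate).} Fix $\ell\in\{1,\dots,n\}$. By \eqref{eq:fd_forward} and \eqref{eq:interp_stencil},
\[
[g_h(z)]_\ell=\frac{f(z+he_\ell)-f(z)}{h}=\frac{m_\theta(z+he_\ell)-m_\theta(z)}{h}.
\]
So the $\ell$-th component of $g_h(z)-\grad m_\theta(z)$ equals $\bigl(m_\theta(z+he_\ell)-m_\theta(z)\bigr)/h-\partial_\ell m_\theta(z)$.

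\emph{Step 2 (one-dimensional Taylor).} Set $\varphi_\ell(t):=m_\theta(z+te_\ell)$. This is $C^2$ on $[0,h]$ because $m_\theta$ is $C^2$, and it satisfies $\varphi_\ell'(0)=\partial_\ell m_\theta(z)$ and $\varphi_\ell''(t)=[\hess m_\theta(z+te_\ell)]_{\ell\ell}$. Taylor's theorem with Lagrange remainder gives some $\zeta_\ell\in(0,1)\subset[0,1]$ with
\[
\varphi_\ell(h)=\varphi_\ell(0)+h\varphi_\ell'(0)+\tfrac{h^2}{2}\varphi_\ell''(h\zeta_\ell).
\]
Hence
\[
[g_h(z)-\grad m_\theta(z)]_\ell=\tfrac{h}{2}\,[\hess m_\theta(z+h\zeta_\ell e_\ell)]_{\ell\ell}.
\]

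\emph{Step 3 (assemble).} Square each component and sum over $\ell$. This yields \eqref{eq:curv_pen} exactly, with a separate intermediate point $\zeta_\ell$ for each coordinate, which is why the statement lists $n$ of them.

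There is no real obstacle here. The only points needing care are that the Lagrange form of the remainder requires $C^2$ only along each segment $[z,z+he_\ell]$, which the hypothesis supplies, and that the mean-value points depend on $\ell$, so no single $\zeta$ should be claimed. It also seems worth noting, in line with the paper's use of the result, that the identity is exact rather than an upper bound. This exactness is what justifies reading the gradient-mismatch term of \eqref{eq:sobolev_loss_sec2} as a penalty on the diagonal curvature of $m_\theta$ at the stencil.
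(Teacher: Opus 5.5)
Your proof is correct. The interpolation conditions turn each forward-difference quotient of $f$ into one of $m_\theta$, and a Lagrange-remainder Taylor expansion of $t\mapsto m_\theta(z+te_\ell)$ on $[0,h]$ then gives each component exactly as $\tfrac{h}{2}[\hess m_\theta(z+h\zeta_\ell e_\ell)]_{\ell\ell}$. The paper states the result without proof, deferring to Proposition~3.1 of Taminiau et al., and this coordinate-wise Taylor argument is the standard route to it.
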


Proposition~\ref{prop:sobolev_curvature} suggests why the gradient-matching term helps: it penalises diagonal curvature along coordinate directions when the model interpolates the finite-difference stencil. The hypothesis is worth naming, because our training does not satisfy it. Equation~\eqref{eq:sobolev_loss} is a regression with weight decay, not an interpolation, so \eqref{eq:interp_stencil} holds only approximately and the identity becomes an approximate one whose error we do not control. The proposition is therefore an idealised mechanism consistent with the acceptance-rate measurement of Section~\ref{sec:exp_sobolev}, not a causal account of it.

\paragraph{On noise in gradient targets.}
Gradient targets $g(z)$ are computed from noisy function evaluations and inherit noise that is amplified by $h^{-1}$. Weight decay, dataset capping, and warm-start training (all used in \cite{Taminiau2025Arxiv2502}) mitigate overfitting to noisy gradients, but the trade-offs are problem dependent. This motivates ablations that vary (i) the number of gradient points $M$, (ii) the finite-difference stepsize policy, and (iii) whether the value term is included in the loss.

\paragraph{Cross-task generalization.}
In some applications, one cares about generalization across objective functions (tasks) under similar sampling geometry. This notion is distinct from local generalization on a single objective. We recommend testing cross-task generalization empirically by sampling multiple functions (e.g.\ from CUTEst categories) using the same sampling design and comparing achievable $e_f,e_g$ profiles; such a study is left as future work.

\section{Algorithm Design}\label{sec:algdesign}

\subsection{Learning-Augmented Trust-Region Framework}
This section develops the first half of our central claim, the role of the embedding. We
adopt a role-based taxonomy that reconciles the two primary sources. Let a base DFO method produce iterates and data; a surrogate model is trained on that data; the surrogate is used either to replace a core ingredient or to assist with safeguarded proposals.

\subsection{Neural Model Construction and Update}
We recount the training objectives used in the primary sources.

\paragraph{Value-only training (Giovannelli et al.).}
Given a dataset $D=\{(x_i,f(x_i))\}_{i=0}^{N}$, Giovannelli et al.\ train $f_{\mathrm{NN}}(\cdot;w)$ by minimizing the empirical risk
\begin{equation}\label{eq:old_mse}
\min_{w\in\R^{n_w}} L(w;D):=\sum_{i=0}^{N}\big(f(x_i)-f_{\mathrm{NN}}(x_i;w)\big)^2,
\end{equation}
and assess generalization via a testing dataset sampled from the same ball \cite{Giovannelli23T027}. They normalize data by shifting/scaling:
\[
x_i \mapsto \frac{x_i-x_0}{\Delta},\quad \Delta=\max_{1\le i\le N}\norm{x_i-x_0},
\]
so the transformed dataset lies in $\Ball{0}{1}$ \cite{Giovannelli23T027}.

\paragraph{Sobolev loss (Taminiau et al.).}
Taminiau et al.\ define $F=\{(y_i,f(y_i))\}_{i=1}^{N}$ and $G=\{(z_j,g(z_j))\}_{j=1}^{M}$, with $g(z_j)\approx \grad f(z_j)$ from finite differences, and train a differentiable surrogate $m_\theta$ by
\begin{equation}\label{eq:sobolev_loss}
\min_{\theta}\ L_{F,G}(\theta):=
\frac{1}{N}\sum_{i=1}^{N}\big(m_\theta(y_i)-f(y_i)\big)^2
+\frac{1}{M}\sum_{j=1}^{M}\norm{\grad m_\theta(z_j)-g(z_j)}_2^2
+\lambda\norm{\theta}_2^2.
\end{equation}
They set $\lambda=10^{-4}$ and solve \eqref{eq:sobolev_loss} with low-memory BFGS (L-BFGS), stopping at iteration $K$ if
\begin{equation}\label{eq:lbfgs_stop}
\norm{\grad L_{F,G}(\theta_K)}_2 \le 10^{-6}\max\{1,\norm{\grad L_{F,G}(\theta_0)}_2\},
\end{equation}
or if a budget of 1000 iterations is reached \cite{Taminiau2025Arxiv2502}. They initialize the first surrogate with He initialization (SoftPlus/SiLU) or Glorot initialization (sigmoid) and warm-start subsequent surrogates from the last trained model \cite{Taminiau2025Arxiv2502}. They cap dataset sizes $N\le 10(n+1)$ and $M\le 10$, removing the oldest points first \cite{Taminiau2025Arxiv2502}.

\paragraph{Value-removal variants and identifiability.}
If the value term is removed and one trains only on gradients, the learned model is identifiable only up to an additive constant. This can be acceptable if only directions are used; however, any acceptance rule based on predicted decreases requires fixing the offset (e.g.\ via anchoring $m_\theta(x_{\mathrm{ref}})=f(x_{\mathrm{ref}})$). This is relevant in workflows where one emphasizes gradient accuracy over value accuracy (e.g.\ some shape-optimization pipelines).

\subsection{Step Computation and Acceptance Criteria}\label{sec:stepcomp}
To keep the main text light, we present pseudocode only for the safeguarded assistance procedure (Algorithm~\ref{alg:surrogate}), which is the embedding at the heart of our central claim. The base method, due to \cite{Grapiglia2024DFQR} and simplified by Taminiau et al., its surrogate-augmented driver, and the FLE-S replacement step of Giovannelli et al.\ are summarized in words below and reproduced in full in the supplementary document \texttt{supplement-pseudocode.pdf}, which accompanies the \texttt{code/} directory.

\subsubsection*{Assistance: safeguarded surrogate procedure}
\begin{algorithm}[t]
\caption{Safeguarded surrogate steps (\textbf{assist}), adapted from Algorithm~1 in \cite{Taminiau2025Arxiv2502}}
\label{alg:surrogate}
\begin{algorithmic}[1]
\Require $v\in\R^n$; oracle access to $f$; datasets $F=\{(y_i,f(y_i))\}_{i=1}^N$ and $G=\{(z_j,g(z_j))\}_{j=1}^M$ with $(v,f(v))\in F$; constants $\sigma,\rho,\lambda,\gamma,\epsilon>0$
\Ensure $v^+$ (best point found), $t^+$ (number of successful surrogate steps), and $F^+$ (newly evaluated points)
\State Train a $C^1$ surrogate $m_\theta$ by approximately solving \eqref{eq:sobolev_loss}
\State $v_0\gets v$, $L_0\gets \sigma$, $F^+\gets \emptyset$, $t\gets 0$
\While{true}
    \State Find smallest integer $\ell_t\ge 0$ such that with
    $
    \hat v_t = v_t - \frac{1}{2^{\ell_t}L_t}\,\grad m_\theta(v_t),
    $
    the surrogate decrease condition holds:
    \[
        m_\theta(v_t)-m_\theta(\hat v_t)\ \ge\ \frac{\rho}{2^{\ell_t}L_t}\,\norm{\grad m_\theta(v_t)}_2^2
    \]
    \State Evaluate $f(\hat v_t)$ and set $F^+\gets F^+\cup\{(\hat v_t,f(\hat v_t))\}$
    \If{$f(v_t)-f(\hat v_t)\ \ge\ \frac{1}{\gamma\sigma}\epsilon^2$}
        \State Accept: $v_{t+1}\gets \hat v_t$, $L_{t+1}\gets 2^{\ell_t-1}L_t$, $t\gets t+1$
    \Else
        \State $t^+\gets t$, $v^+\gets v_t$ and \textbf{break}
    \EndIf
\EndWhile
\State \Return $(v^+,t^+,F^+)$
\end{algorithmic}
\end{algorithm}

\subsubsection*{Taminiau et al.: base method and surrogate-augmented method}
Taminiau et al.\ integrate the surrogate procedure into a finite-difference Armijo-type base method (a derivative-free gradient method); the pseudocode is reproduced from Algorithms~2 and 3 of \cite{Taminiau2025Arxiv2502}, with minimal notational changes, in the supplementary document \texttt{supplement-pseudocode.pdf}. In brief, the base method computes a forward-difference gradient whose stepsize is tied to the current scale $\sigma_k$, accepts trial points by an Armijo-type test, and doubles the scale upon rejection; the augmented variant calls the surrogate procedure (Algorithm~\ref{alg:surrogate}) after every accepted base step and continues from the best point found.

As reproduced there, these algorithms carry no termination test, and this is inherited from the source rather than an omission on our part: the analysis of \cite{Taminiau2025Arxiv2502} covers only the iterations before the hitting time $T(\epsilon)=\inf\{k:\norm{\grad f(x_k)}_2\le\epsilon\}$, and Lemma~2.2 there establishes that iteration $k$ is well defined precisely when $\norm{\grad f(x_k)}_2>\epsilon$. Near a stationary point the inner loop can fail both tests indefinitely, shrinking $h_i$ without ever returning. The pseudocode should therefore be read as describing the iterations $k<T(\epsilon)$ only; any executable version needs an external stopping rule. Our implementation (Section~\ref{sec:experiments}) caps the inner index at $i\le 40$, treats exhaustion of that cap as an approximate-stationarity or stall exit, and additionally bounds every run by the evaluation budget.

\subsubsection*{Replacement: surrogate gradient within FD-BFGS in FLE-S}
Giovannelli et al.\ propose FLE-S by modifying the FD-BFGS (Full-Eval) step: once the dataset $D_k$ exceeds a threshold, they build a surrogate and set the gradient approximation to the surrogate gradient \cite{Giovannelli23T027}. They also add a random point in $\Ball{x_k}{0.1}$ before training/building the surrogate, and they update $D_k$ differently depending on surrogate type (add-first for poly/RBF; add-all for NN). The corresponding pseudocode is reproduced in the supplementary document \texttt{supplement-pseudocode.pdf}.

\subsection{From Diagnostics to Design: the ARAS Framework}\label{sec:aras}

The analysis so far suggests a design recipe. Monitor how far the current surrogate can be
trusted, and let that decide how much of the step it is allowed to influence, rather than
fixing the surrogate's role in advance. We call this the \emph{Adaptive Radius-Aware
Surrogate} framework (ARAS).

The open question in such a design is which quantity to gate on. Our first version used the
geometric diagnostics of Section~\ref{sec:tradeoffs}, the overlap ratio $r_k$ and the
step-spacing statistic $N_{\mathrm{sp}}$. Section~\ref{sec:exp_diag} tests those against
data and finds their effect on acceptance an order of magnitude too small to gate on, so we
do not gate on them. The same
section tests a second family of candidates, computed from quantities the method already
holds, and identifies two that do predict whether the next surrogate proposal will be
accepted: the ratio $\|\grad m_k(x_k)\|/\|g_k^{\mathrm{FD}}\|$ between the surrogate and
finite-difference gradient norms, and the training residual of the surrogate on its own
data. Sorted into quartiles by the first of these, the empirical acceptance rate over the
$3196$ outer iterations at which both statistics are defined falls monotonically through
$0.484$, $0.343$, $0.106$ and $0.045$, an elevenfold drop from the lowest quartile of
$\kappa_k$ to the highest.
The framework therefore gates on
\begin{equation}\label{eq:aras_gate}
\kappa_k \;=\; \frac{\|\grad m_k(x_k)\|}{\|g_k^{\mathrm{FD}}\|},
\qquad
\varrho_k \;=\; \Bigl(\tfrac{1}{|F_k|}\textstyle\sum_{(y,f)\in F_k}(m_k(y)-f)^2\Bigr)^{1/2},
\end{equation}
a scale disagreement and a fit residual, both cheap and both validated.

The blended direction admits a guarantee, and the quantity it needs is not the surrogate's
accuracy\,--\,which Section~\ref{sec:exp_region} shows is not boundable below any useful
floor\,--\,but its scale relative to the finite-difference gradient, which is exactly the
$\kappa_k$ the gate already computes at no oracle cost.

Bounding the blending weight by
\begin{equation}\label{eq:alpha_rule}
0\ \le\ \alpha_k\ \le\ \frac{1-\theta}{1+\kappa_k},\qquad \theta\in(0,1),
\end{equation}
gives three things, proved in the supplementary document. The blended
direction satisfies $\ip{g_k}{d_k}\le-\theta\norm{g_k}^2$ and $\norm{d_k}\le2\norm{g_k}$
(Proposition). Backtracking on the true objective terminates in
$\lceil\log_2(4L\bar\beta/((1-c)\theta))\rceil_+$ halvings with
$\beta_k\ge\min\{\bar\beta,(1-c)\theta/(8L)\}$, provided the working scale satisfies
\begin{equation}\label{eq:scale_cond}
2^{i_k}\sigma_k\ \ge\ \frac{L}{(1-c)\theta},\qquad c\in(0,1),
\end{equation}
which is equivalent to the finite-difference error obeying
$e_k\le\frac{(1-c)\theta}{4}\norm{g_k}$ (Lemma). And ARAS then requires at most
$\mathcal{O}(n\epsilon^{-2})$ evaluations to reach $\norm{\grad f(x_k)}_2\le\epsilon$, the
base method's order, with a constant no worse: at $\alpha_k\equiv0$ the iteration is the base
method itself (Theorem).

Three points. Condition \eqref{eq:alpha_rule} is computable: $\kappa_k$ costs nothing beyond
the surrogate gradient already formed. It is not vacuous: over the $3202$ logged iterations
the cap has median $0.247$ at $\theta=1/4$ and permits $\alpha_k\ge0.1$ at $63\%$ of them.
And \eqref{eq:scale_cond} is about the base method rather than the network: with
$h_i=2\epsilon/(5\sqrt n\,2^i\sigma_k)$ the forward-difference error obeys
$e_k\le L\epsilon/(5\cdot2^{i_k}\sigma_k)$ while the inner loop exits only with
$\norm{g_k}\ge4\epsilon/5$, so $e_k/\norm{g_k}\le L/(4\cdot2^{i_k}\sigma_k)$; the condition
asks the working scale to have grown past a fixed multiple of $L$, which is what the
$\sigma_k$ update does. What the theorem does not give is an \emph{improvement} in rate.

The gate is calibrated rather than guessed, and the answer is not the expected one: on a
cheap deterministic oracle the calibrated gate never fires. A rejected proposal costs one
evaluation, an accepted one saves the $n+1$ of a finite-difference gradient, so gating pays
only where acceptance falls below $1/(n+1)$, and on this benchmark it does not
(Section~\ref{sec:exp_diag}).

\paragraph{Running the blended step.} A theorem about an algorithm nobody has run is worth
little, so we implemented the framework (pseudocode in the supplementary document) with $\theta=1/4$, $c=1/2$ and
$\alpha_{\max}=1/2$, taking the blended step at every outer iteration and following it with
the same safeguarded loop as elsewhere. Over the $117$ instances it solves $74$ at
$\tau=10^{-5}$ against $99$ for plain safeguarded assistance and $68$ for the base method,
with $464$ of $1339$ blended steps accepted at a mean $\alpha_k$ of $0.199$. The blended
step is convergent, as the Theorem says, and it is not worth its price: each
attempt spends backtracking evaluations on the true objective, and the two thirds that fail
buy nothing the surrogate loop would not have found more cheaply.

The gate calibration says the same from the other direction, and the two together settle
what ARAS should be: the instance surviving both tests has $\alpha_k\equiv0$ and is exactly
the safeguarded assistance benchmarked throughout. We keep the theorem because it certifies
that opening the gate cannot destroy convergence, and report the experiment because it says
when to leave it shut.

\begin{remark}[What is and is not covered]\label{rem:aras_safe}
The Theorem covers the blended step and the outer iteration; two parts of the
framework it does not cover should be named. The radius test
$r_{\mathrm{gen}}(x_k)\ge\eta_{\mathrm{rad}}\Delta_k$ uses a computable proxy for a quantity
Definition~\ref{def:rgen} does not make computable, so it can only make the gate more
conservative, never less; the argument above holds whether or not it fires because
\eqref{eq:alpha_rule} is enforced in either branch. And the inner surrogate loop of
Algorithm~\ref{alg:surrogate} that follows an accepted blended step contributes the finite
termination result and the gain factor of Theorem~\ref{thm:eta},
neither of which the blended step disturbs, since each of its own steps is separately
validated on the true objective.
\end{remark}

\subsection{Convergence Analysis}
\paragraph{Assistance paradigm: finite termination and surrogate gain.}
Taminiau et al.\ prove that the surrogate procedure terminates in finite time under a lower-bounded objective assumption, and they derive evaluation-complexity improvements quantified by a surrogate-gain factor \cite{Taminiau2025Arxiv2502}. We restate the essential components.

\begin{assumption}[Lower bounded objective]\label{ass:lower_bounded}
$f$ is bounded below by $f_{\mathrm{low}}\in\R$.
\end{assumption}

Algorithm~\ref{alg:surrogate} then terminates after finitely many successful surrogate steps: each accepted step decreases $f$ by at least $\epsilon^2/(\gamma\sigma)$, so infinitely many acceptances would drive $f$ below $f_{\mathrm{low}}$ (Lemma~2.1 of \cite{Taminiau2025Arxiv2502}).

\begin{theorem}[Surrogate gain factor; Theorem~2.4 of \cite{Taminiau2025Arxiv2502}]\label{thm:eta}
Let $f$ be bounded below (Assumption~\ref{ass:lower_bounded}) with $L$-Lipschitz gradient, let
\[
T(\epsilon):=\inf\{k\in\mathbb{N}:\ \norm{\grad f(x_k)}_2\le\epsilon\},
\]
and let $t_k$ be the number of successful surrogate steps taken at outer iteration $k$. Define the \emph{average} number of successful surrogate steps per outer iteration over the first $T$ iterations,
\begin{equation}\label{eq:Sdef}
S(T):=\frac{1}{T}\sum_{k=0}^{T-1}t_k ,
\end{equation}
and the surrogate gain
\begin{equation}\label{eq:eta}
\eta(S)=\frac{1+\frac{S}{2(n+1)}}{1+S}.
\end{equation}
Then the number $FE(\epsilon)$ of evaluations of $f$ required by the surrogate-augmented method satisfies
\begin{equation}\label{eq:fe_bound}
FE(\epsilon)\ \le\ 4\,\eta\!\left(S(T(\epsilon))\right)(n+1)\,C_{\max}\,(f(x_0)-f_{\mathrm{low}})\,\epsilon^{-2}
\;+\;\log_2\!\left(\tfrac{\sigma_{\max}}{\sigma_0}\right)(n+1)\;+\;T(\epsilon),
\end{equation}
with $\sigma_{\max}$ and $C_{\max}$ the problem constants defined in \cite{Taminiau2025Arxiv2502}.
\end{theorem}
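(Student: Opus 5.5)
The plan is an evaluation-accounting argument over the outer iterations $k<T(\epsilon)$. Evaluations fall into three groups, and I bound each separately. \emph{(a)} Finite-difference gradients: each costs $n+1$ evaluations, and one is formed per inner trial at each outer iteration. \emph{(b)} Base-method trial-point evaluations: one per inner trial. \emph{(c)} Surrogate-procedure evaluations: by Algorithm~\ref{alg:surrogate}, at most $t_k+1$ per outer iteration, namely $t_k$ accepted steps plus one rejected step.

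The first step is to separate the scale-doubling rejections from the successful outer iterations. Because $\sigma$ is doubled on each rejection and, as in the base-method analysis of \cite{Taminiau2025Arxiv2502}, never exceeds $\sigma_{\max}$ (Lemma~2.2 there gives well-definedness while $\norm{\grad f(x_k)}>\epsilon$), the total number of doublings over the run is at most $\log_2(\sigma_{\max}/\sigma_0)$. Each doubling wastes at most one gradient, which gives the additive term $\log_2(\sigma_{\max}/\sigma_0)(n+1)$. The one extra rejected-trial evaluation per outer iteration contributes at most $T(\epsilon)$.

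The second step is a sufficient-decrease count. Each accepted base step decreases $f$ by at least $\epsilon^2/(C_{\max}\sigma_{\max})$-type amounts; I will absorb this into the constant $C_{\max}$ so that the decrease is at least $c_0\epsilon^2$. Each accepted surrogate step decreases $f$ by at least $\epsilon^2/(\gamma\sigma)$, which is of the same order, say at least $c_0\epsilon^2/2$ after choosing $\gamma$ appropriately. Telescoping from $f(x_0)$ down to $f_{\mathrm{low}}$ (Assumption~\ref{ass:lower_bounded}) then gives
\[
T\bigl(1+S/2\bigr)\,c_0\epsilon^2\lesssim f(x_0)-f_{\mathrm{low}},
\]
where $T=T(\epsilon)$ and $TS=\sum_k t_k$.

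The third step counts the cost of the accepted work. The successful outer iterations cost $(n+1)T+\sum_k t_k=(n+1)T\bigl(1+S/(n+1)\bigr)$. Combining this with the decrease bound expresses the cost as $(n+1)\,\eta(S)$ times the base budget $4C_{\max}(f(x_0)-f_{\mathrm{low}})\epsilon^{-2}$. The ratio $\bigl(1+\tfrac{S}{2(n+1)}\bigr)/(1+S)$ arises precisely because surrogate steps cost one evaluation instead of $n+1$, while each delivers a comparable, roughly half, decrease. Summing \emph{(a)}--\emph{(c)} then gives \eqref{eq:fe_bound}.

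I expect the main obstacle to be matching constants: reconciling the surrogate acceptance threshold $\epsilon^2/(\gamma\sigma)$, which involves the current $\sigma_k$, with the base-step decrease, so that the factor $2$ in $\eta$ and the factor $4$ come out exactly. My first move is to bound $\sigma_k\le\sigma_{\max}$ uniformly and to fix $\gamma$ as in \cite{Taminiau2025Arxiv2502}. Since this is a restatement, the final option is to defer those constants to Theorem~2.4 there and verify only the accounting structure.
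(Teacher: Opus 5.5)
The paper gives no proof of its own; it cites Theorem~2.4 of \cite{Taminiau2025Arxiv2502}. The form of \eqref{eq:fe_bound} nonetheless determines the accounting. Your three-way split of evaluations is the right frame, but you get wrong the two steps that produce $\eta$.

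\textbf{First, the doubling count.} The scale is not monotone. After an accepted base step the method resets $\sigma_{k+1}=2^{i_k-1}\sigma_k$ (floored at $\sigma_{\min}$, which is why that parameter exists), just as Algorithm~\ref{alg:surrogate} resets $L_{t+1}\gets2^{\ell_t-1}L_t$. So the total number of doublings is not bounded by $\log_2(\sigma_{\max}/\sigma_0)$. If every outer iteration needs exactly one doubling, $\sigma_k$ never moves and the base method spends two gradients on every iteration, which already violates your bound. Telescoping $\log_2\sigma_{k+1}\ge\log_2\sigma_k+i_k-1$ gives $\sum_{k<T}i_k\le T+\log_2(\sigma_{\max}/\sigma_0)$. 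Hence there are at most $2T+\log_2(\sigma_{\max}/\sigma_0)$ inner trials, each costing $n+1$ evaluations: $n$ forward differences reusing the known $f(x_k)$, plus one trial point. (Your (a) and (b) together charge $n+2$.) The extra $T$ trials belong in the leading term, not the additive one. The base method costs $2(n+1)$ evaluations per outer iteration amortised, and that is where the $2$ in the numerator of $\eta$ comes from.

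\textbf{Second, the decrease.} A surrogate step is not credited with half of a base step's guaranteed decrease but with all of it. With $\norm{g}\ge4\epsilon/5$ the Armijo test guarantees $\norm{g}^2/(8\cdot2^{i_k}\sigma_k)\ge2\epsilon^2/(25\cdot2^{i_k}\sigma_k)$. The choice $\gamma=25/2$ in \cite{Taminiau2025Arxiv2502} is made so that the threshold $\epsilon^2/(\gamma\sigma)$ matches this guarantee. With one common floor $d$ of order $\epsilon^2/\sigma_{\max}$ for both kinds of step, telescoping gives $T(1+S)\,d\le f(x_0)-f_{\mathrm{low}}$, and
\[
(n+1)\Bigl(2T+\log_2\tfrac{\sigma_{\max}}{\sigma_0}\Bigr)+\sum_{k<T}(t_k+1)
=2(n+1)T\Bigl(1+\tfrac{S}{2(n+1)}\Bigr)+\log_2\Bigl(\tfrac{\sigma_{\max}}{\sigma_0}\Bigr)(n+1)+T.
\]
The first term is at most $2(n+1)\,\eta(S)\,(f(x_0)-f_{\mathrm{low}})/d$. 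The $+T(\epsilon)$ is the rejected proposal that ends each surrogate call, which your (c) already counts, not an extra base-method trial.

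\textbf{Why your combination fails.} Your ingredients are cost $(n+1)T(1+S/(n+1))$ against decrease $T(1+S/2)\,c_0\epsilon^2$. They yield the ratio $(1+\frac{S}{n+1})/(1+\frac{S}{2})$, which tends to $2/(n+1)$ rather than $1/(2(n+1))$ as $S\to\infty$. So your claim that the cost comes out as $(n+1)\eta(S)$ times the base budget does not follow. Because the discrepancy depends on $S$, no choice of constants repairs it. Deferring only the constants to Theorem~2.4 would therefore leave the wrong accounting structure in place.
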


The statement should be read with three qualifications. First, $S(T)$ in \eqref{eq:Sdef} is an \emph{average per outer iteration}, not a total count: $\eta$ is driven down only if the surrogate succeeds repeatedly at a typical iteration, and $S(T)\ge n$ is what yields the favourable regime $\eta\le 3/(2(n+1))$. Second, the gain multiplies only the leading $\epsilon^{-2}$ term of \eqref{eq:fe_bound}; the remaining two terms are unaffected, so the improvement is in the constant of the dominant term rather than in the $\mathcal{O}(n\epsilon^{-2})$ rate itself. Since $\eta(0)=1$, the bound never degrades relative to the base method. Third, and this limits how the result may be used, $S(T(\epsilon))$ is measured on the very run the bound describes. The statement is therefore a posteriori: before running, the only value of $\eta$ one is entitled to assume is $\eta(0)=1$, at which the bound is the base method's own. It certifies that observed surrogate success translates into a smaller constant, not that success will occur.

\paragraph{Replacement paradigm: why guarantees are harder.}
In replacement methods, surrogate gradient errors propagate through quasi-Newton updates and line-search decisions: an error in $g_k$ contaminates both the direction $p_k=-H_kg_k$ and the curvature pair $(s_k,y_k)$ that updates $H_k$, so the damage accumulates. Giovannelli et al.\ report that surrogate modeling ``hardly'' improves their state-of-the-art FLE method when used to approximate gradients, even with NN surrogates \cite{Giovannelli23T027}. Without uniform region-wise error control or acceptance safeguards tied to the true objective, replacement can underperform even where pointwise approximation looks good.

\section{Numerical Experiments}\label{sec:experiments}

This section reports the numerical evidence. The experiments are small, and they hold the network class, the training pipeline and the base method fixed across variants, so a difference between two of them cannot be blamed on implementation. That is weaker than varying one mechanism at a time, and we do not claim the stronger thing: two embeddings differ in more than their nominal role, since changing it also changes the evaluation cost per iteration, the number of proposals, the fallback behaviour and the trajectory along which the surrogate is trained, and those in turn change when a run effectively stops. The deterministic runs of Sections~\ref{sec:exp_role} to \ref{sec:exp_sobolev} use a single run per instance. What follows is therefore evidence consistent with the mechanisms we have described and a measurement of how large their effects are in a controlled setting, not a causal isolation of the role variable and not a benchmark against state-of-the-art model-based DFO software. The complete implementation, the $3146$ raw result files behind every number reported
here, and the scripts that turn one into the other are available at \url{https://github.com/chengkuobian/neural-surrogates-dfo}: plain
Python with NumPy and Matplotlib only, the surrogate, its analytic parameter gradients, the
Adam trainer and all four solvers implemented from scratch, with a \texttt{README} giving
the exact commands, parameter values, caps and fallback rules, and a table mapping each
script to the table or figure it produces.

\subsection{Setup}\label{sec:exp_setup}

\paragraph{Problems.} We use thirteen problems from the Mor\'e--Garbow--Hillstrom collection
\cite{MoreGarbowHillstrom1981}: sphere, chained Rosenbrock, Powell singular, trigonometric,
Broyden tridiagonal, Broyden banded, variably dimensioned, discrete boundary value,
discrete integral equation, Brown almost-linear, penalty I, extended Freudenstein--Roth
and extended Wood. Each is taken in dimensions $n\in\{4,8,16\}$ from three starting
points, the standard one, the standard one scaled by ten, and a random perturbation of
it, giving $117$ instances. These are $117$ instances of thirteen functions, not $117$
independent problems; counts across them are not independent samples, which is why the
intervals reported below resample problems rather than instances. The budget is $100$ simplex
gradients, that is $100(n+1)$ evaluations, the convention of \cite{MoreWild2009}.

\paragraph{Methods.} Six solvers are compared.
\begin{enumerate}[leftmargin=2em]
\item \textbf{FD-base}: the derivative-free quadratic regularization method of \cite{Grapiglia2024DFQR}, in the simplified finite-difference Armijo-type form \cite{Taminiau2025Arxiv2502} use as their base method, with the inner loop capped for practicality, $\epsilon=10^{-3}$ and $\sigma_0=1$.
\item \textbf{NN-assist}: the safeguarded assistance embedding (Algorithm~\ref{alg:surrogate}, driven as in \cite{Taminiau2025Arxiv2502}) with a Sobolev-trained surrogate.
\item \textbf{NN-assist-val}: identical to NN-assist except that the gradient term is removed from the loss \eqref{eq:sobolev_loss} (value-only training).
\item \textbf{DFO-TR}: a model-based trust-region method with minimum-Frobenius-norm
quadratic interpolation on $2n+1$ points, exact trust-region subproblem solves, and a
geometry-improving step when the interpolation set is poorly poised. This is the standard
scheme behind \textsc{newuoa} and \textsc{bobyqa} \cite{Powell2006NEWUOA} and the DFO-TR
family of \cite{Conn2009DFO}; implemented alongside the rest because our compute environment has no network access;
its interpolation conditions and subproblem solutions are checked against direct
computation, and Section~\ref{sec:exp_limits} reports a comparison against released
Py-BOBYQA run on the authors' own machine.
\item \textbf{DFO-TR+NN}: the same method with the assistance loop attached. It refits the
surrogate once every $n+1$ accepted steps rather than at every step, because a
trust-region iteration costs one evaluation where a finite-difference outer iteration costs
$n+1$; this equalises training cost per evaluation so the comparison is not confounded by
refit frequency.
\item \textbf{NN-replace}: surrogate-gradient replacement with line search and finite-difference fallback. Once $|F|\ge 2(n+1)$ the FD gradient is \emph{replaced} by $\grad m_\theta(x_k)$, saving $n$ evaluations per iteration; backtracking still acts on the true objective and repeated failure triggers an FD fallback. It is not an unsafeguarded method, so the contrast with assistance is not safeguarded versus unsafeguarded: the two also differ in evaluation cost per iteration and in how often the surrogate is consulted. Keeping the skeleton fixed removes quasi-Newton curvature updating as a confounder; how a surrogate gradient would behave inside an FD-BFGS step such as FLE-S is a separate question that this experiment does not address, and that \cite{Giovannelli23T027} studies directly.
\end{enumerate}
The surrogate is a one-hidden-layer softplus network of width $6n$ with inputs and targets normalized per training, weight decay $\lambda=10^{-4}$, trained from a fresh He initialization by full-batch Adam ($900$ iterations, the setting Section~\ref{sec:exp_arch} selects); datasets are capped at $N\le 10(n+1)$ and $M\le 10$ with oldest-first removal, following \cite{Taminiau2025Arxiv2502}, as are $\sigma_0=1$ and $\lambda=10^{-4}$. We depart from it in five settings, all listed
here because two of them change how readily a surrogate step is taken: Adam for $900$
iterations rather than L-BFGS, width $6n$ rather than $5n$, $\epsilon=10^{-3}$ rather than
$10^{-5}$, $\sigma_{\min}=10^{-10}$ rather than $10^{-2}$, and, in the acceptance machinery
itself, $\rho=0.25$ rather than $10^{-4}$ and $\gamma=100$ rather than $25/2$. The last two
make a surrogate step harder to accept than in the original, so our acceptance rates are
lower than that design would give and the advantage we measure for assistance is
conservative. All of them apply identically to every surrogate variant, so none affects the
role comparison, which is a difference between variants rather than an absolute level. All solver runs are deterministic; the region-size study below uses three random seeds and reports medians.

\paragraph{Metrics.} The primary metric throughout is the number of true function
evaluations, for the reason the problem class supplies: in the simulation-based setting
these experiments are about, one evaluation is a model run and everything else is free by
comparison, so sample efficiency is what a method is judged on. We report
(i) evaluations to reach a target accuracy, (ii) the success rate across problems as data
profiles, and (iii) the surrogate acceptance rate.

Wall-clock time appears in Sections~\ref{sec:exp_time} and \ref{sec:discussion} and should
be read for what it is. Every solver here is unoptimised pure NumPy, so absolute times are
properties of this implementation rather than of the algorithms and we draw no ranking from
them. What they are used for is a question implementation speed does not affect: at what
oracle cost the evaluations a surrogate saves outweigh the time it takes to fit, and how
that threshold moves with $n$ when the parameter count grows quadratically and the training
set does not. Probe-set errors and redundancy measures are likewise reported only where they
explain a primary metric. Accuracy is measured in the standard way \cite{MoreWild2009}: with $x_N$ the best point
after $N$ evaluations, $x_0$ the start and $f_L$ the smallest value any solver in the pool
attains, a run reaches accuracy $\tau$ at the first $N$ with
$f(x_0)-f(x_N)\ge(1-\tau)\bigl(f(x_0)-f_L\bigr)$, and $N_{a,p}=+\infty$ if the budget runs out
first. The data profile $\delta_a(\beta)$ is the fraction of problems with
$N_{a,p}\le\beta(n_p+1)$, that is, solved within $\beta$ groups of $n_p+1$ evaluations
\cite{MoreWild2009}; performance profiles \cite{DolanMore2002} give the same ranking on this
test set and are omitted.

\subsection{Role of the Embedding: Replacement versus Assistance}\label{sec:exp_role}

Table~\ref{tab:role_results} reports how many of the $117$ instances each method solves at three accuracy levels, and Figure~\ref{fig:data_profiles} shows the corresponding data
profiles.

\begin{table}[t]
\centering
\caption{Extended benchmark, $117$ instances solved to accuracy $\tau$ within $100$
simplex gradients, with $f_L$ taken over all eight variants below, released Py-BOBYQA
included. The last column counts instances on which a method returns the lowest final value
of the eight. Surrogate variants use the width-$6n$, $900$-iteration network of
Section~\ref{sec:exp_arch}; the last row repeats safeguarded assistance at the width-$3n$,
$300$-iteration setting to show what undertraining costs.}
\label{tab:role_results}
\small
\begin{tabular}{l ccc c}
\toprule
Method & $\tau=10^{-1}$ & $10^{-3}$ & $10^{-5}$ & lowest $f$\\
\midrule
FD base method                    & 109 & 93 & 67 & 8\\
NN assist (Sobolev)               & 113 & 100 & 84 & 7\\
NN assist (value-only)            & 109 & 94 & 66 & 1\\
NN replace                        & 110 & 100 & 65 & 0\\
\midrule
DFO-TR (ours)                     & 113 & 102 & 88 & 33\\
DFO-TR $+$ NN assist              & 113 & 103 & 86 & 10\\
Py-BOBYQA (released)              & \textbf{117} & \textbf{116} & \textbf{103} & \textbf{55}\\
\midrule
NN assist, width $3n$, $300$ it.\ & 110 & 100 & 75 & 3\\
\bottomrule
\end{tabular}
\end{table}

\begin{figure}[t]
\centering
\includegraphics[width=0.58\linewidth]{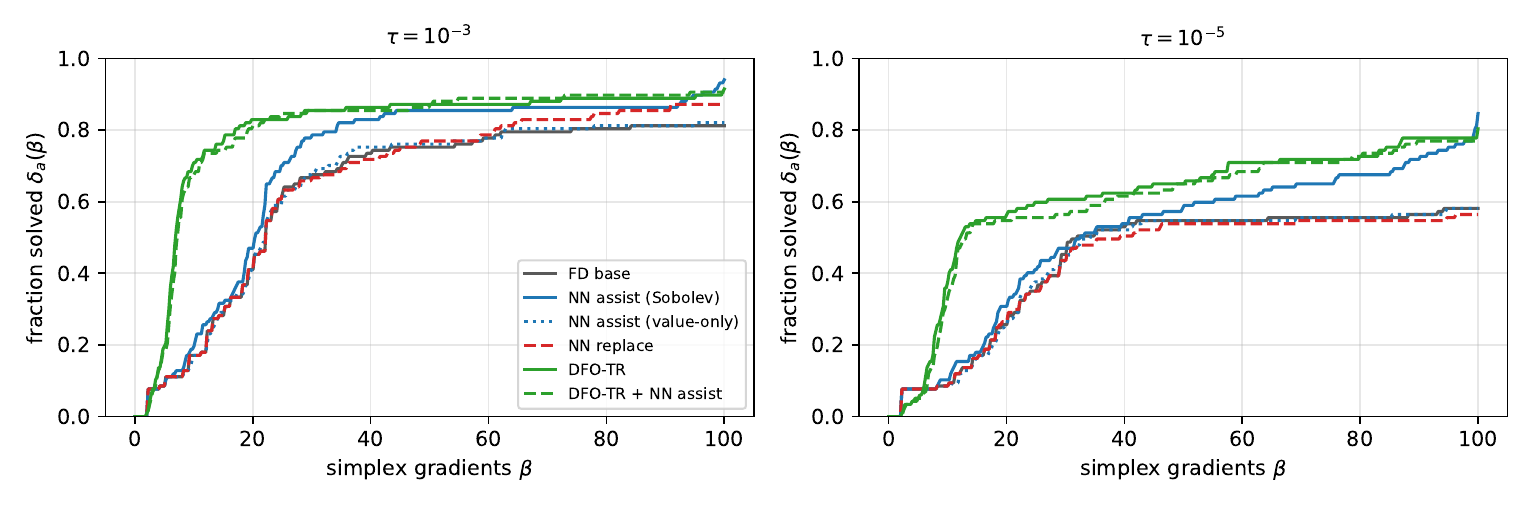}
\caption{Data profiles $\delta_a(\beta)$ over the $117$ instances, with $f_L$ taken over
all six methods, all surrogate variants at the width-$6n$, $900$-iteration setting.
Safeguarded assistance separates from its own base method as the tolerance tightens and
tracks the model-based trust-region solver, gradient replacement is the weakest
embedding, and attaching the surrogate to the trust-region solver does not help.}
\label{fig:data_profiles}
\end{figure}

Table~\ref{tab:role_results} scores every variant against released Py-BOBYQA
\cite{Powell2006NEWUOA} rather than our own interpolation code, and the choice matters. The
Mor\'e--Wild criterion asks a solver to reach $f_0-\tau(f_0-f_L)$, so a released solver
ending two orders of magnitude lower moves $f_L$ enough to reorder the methods above it:
scored among the surrogate variants alone, assistance leads our trust-region method $99$ to
$94$; adding the released solver reverses this to $84$ against $88$. We report the second,
because $f_L$ should be the best value anyone attains, and record the first because it is
the comparison a paper that never installs the released code would make.

What the surrogate does for the method it is attached to is not in doubt at either
reference. Safeguarded assistance lifts its own base method from $67$ instances to $84$ at
$\tau=10^{-5}$, from $93$ to $100$ at $10^{-3}$ and from $109$ to $113$ at $10^{-1}$: the
margin is largest exactly where the base method has stalled and least where it succeeds
anyway, which is the smoothing argument of Section~\ref{sec:nn_motivation} and the reason a
stalled finite-difference method still has a dataset worth fitting. Seventeen instances of
$117$ is the effect the rest of this paper is about.

The last row of the table is the same embedding at the width-$3n$, $300$-iteration network
an earlier version of this paper used, and it matters for how the rest should be read: that
setting reaches $75$ where the trained one reaches $84$. A third of what looked like a limit
of the mechanism was a limit of the fit. Every surrogate number reported from here on uses
the trained setting.

\paragraph{How much of that is the base method's weakness?} Almost all of it. Both
interpolation methods are ahead of assistance at every tolerance, our own by four instances
at $\tau=10^{-5}$ and the released one by nineteen. The last column sharpens it: Py-BOBYQA
returns the lowest of the eight final values on $55$ instances and our trust-region method on
$33$, against $7$ for the surrogate. The gap in final values is larger than
the gap in counts: the paired $\log_{10}$ ratio between Py-BOBYQA and the surrogate has
median $-2.321$ with interval $[-5.189,-0.168]$, so the released solver typically ends more
than two orders of magnitude lower, and it returns the lowest value of the four on $57$ of
the $117$ instances against the surrogate's $8$. A trained surrogate on a weak base method
is a large improvement on that base method and is not a substitute for a well-engineered
interpolation model.

Nor does it help one. Attaching the same trained network to our trust-region solver gives
$86$ against the $88$ it reaches alone, which is the room condition of
Section~\ref{sec:exp_noise} seen from the other side: a method that already builds a
quadratic from the same evaluations has nothing left for a surrogate to claim.

\paragraph{One problem where it does win.} The aggregate hides a case worth isolating,
because it is the only one in this set where a learned local model beats a well-engineered
interpolation model outright. On extended Wood the surrogate solves $7$ of its $9$ instances
at $\tau=10^{-5}$ against $4$ for Py-BOBYQA, $4$ for our trust-region method and $4$ for
its own base: it is the only one of the eight variants that gets past four, and it returns
the lowest final value of all eight on $3$ of the nine, once by $2.3$ orders of magnitude. Extended Wood is a coupled
quartic whose minimum sits at the end of a narrow curved valley, the regime in which a
minimum-Frobenius-norm quadratic is a poor model at any radius the method works at, and it
is exactly the regime the smoothing argument of Section~\ref{sec:nn_motivation} points to.

We resist generalising, for two reasons the same data supply. Chained Rosenbrock and
extended Freudenstein--Roth have curved valleys of the same kind and the surrogate is behind
on both. And the approximation-level statistic that ought to predict the win does not: the
ratio of the quadratic's uniform gradient error to the network's is $0.89$ at $n=16$, the
instance won by the largest margin, so by that measure the quadratic is the better local
model exactly where the surrogate wins. One problem is an existence result, not a rule. The
set of objectives on which a safeguarded neural surrogate beats released interpolation
software is not empty, and finding a diagnostic that identifies it in advance is the
question this paper leaves in the best shape for someone to answer.

\paragraph{Does the picture change with dimension?} The obvious reply is that $n\le16$ is
where interpolation is at its strongest, since a minimum-Frobenius-norm quadratic needs only
$2n+1\le33$ points, and that a learned model should come into its own where that number
grows. This deserves a direct test rather than a limitation note, so we repeated the comparison at
$n=32$, $n=64$ and $n=128$, where a quadratic needs $65$, $129$ and $257$ points, on all
thirteen problems under a budget of $50(n+1)$, and added released Py-BOBYQA
\cite{Powell2006NEWUOA} at the two largest so that the comparison does not rest on our own
interpolation code.

Adding it changed the answer, and the episode bears on how such comparisons should be read.
Our own trust-region implementation degrades with dimension: at $\tau=10^{-5}$ it leads
safeguarded assistance $88$ to $84$ at $n\le16$, leads
$11$ to $8$ at $n=32$, and at $n=64$ solves $5$ of $13$ against $7$ for assistance. Each of
these pairs is scored against the solvers available at that dimension, eight at $n\le16$,
four at $n=32$ and $n=64$ and four at $n=128$, so a pair may be compared internally but the
counts should not be read across dimensions as a single series. Its last improvement on the
median instance at $n=64$ comes at evaluation $2024$ of $3250$ against $2977$. Taken alone
this looks like the crossover the motivation for learned surrogates predicts, arriving where
the bookkeeping of a $129$-point interpolation set begins to dominate the budget.

Released Py-BOBYQA shows the reading is wrong: on the same thirteen problems at $n=64$ it
solves $12$ of $13$, far ahead of assistance at $7$ and of our implementation at $5$, and
$12$ against $9$ and $8$ at the looser $\tau=10^{-3}$, so its lead is not an artefact of
demanding high accuracy. The degradation is a property of our code, not of interpolation,
since a solver that maintains its model by rank-one updates and manages geometry carefully
loses nothing at this dimension. Our implementation is an adequate stand-in at $n\le16$,
where its median $\log_{10}$ difference from the released solver is $0.000$
(Section~\ref{sec:exp_limits}), and is not one at $n=64$. We report this rather than quietly
dropping the earlier claim, because the same trap awaits anyone who benchmarks against a
reimplementation without checking it at the dimension of interest.

At $n=128$, where a quadratic needs $257$ points and the budget allows $6450$ evaluations,
the released solver solves $10$ of $13$ at $\tau=10^{-5}$ and returns the lowest value on
$8$; assistance solves $8$ and returns the lowest on $3$; the finite-difference base method
solves $6$. Training the network properly is worth two instances here, the same embedding at
the width-$3n$ setting solving $6$ and never returning the best value, but it does not close
the gap. The pattern across the four dimensions is therefore not that the surrogate's
prospects improve as $2n+1$ grows. It is that a trained surrogate falls behind a
well-engineered interpolation method at every dimension, by nineteen instances of $117$ at
$n\le16$ and by two of thirteen at $n=128$.

The conclusion is uniform across every dimension we can test. A trained surrogate on a weak
base method is not a weak method: it lifts that base by seventeen instances of $117$, and on
extended Wood it beats released interpolation software outright. What it does not do is
match that software in aggregate, descend as far on most instances, or add anything to a
method already building a quadratic from the same evaluations.

Second, replacement is behind the base method where the tolerance separates them, $65$
against $67$ at $\tau=10^{-5}$, and it is behind at the trained setting, so the loss is not
a training artefact. It also returns the lowest of the eight final values on no instance at
all. Across the
benchmark it accepted no surrogate-gradient step that survived the true-objective test and
triggered $678$ finite-difference fallbacks. The fallback count is a direct measurement and
the natural explanation for the loss, though it is an observed correlate rather than an
independently manipulated variable.

Third, at the loose tolerance the methods are within five instances of each other.
Safeguarding makes assistance close to free: a rejected proposal costs one evaluation, so on
instances where the base method already succeeds quickly there is nothing to gain and little
to lose.

\subsection{Region Size and the Generalization Radius}\label{sec:exp_region}

To probe the radius half of the claim outside the optimization loop, we sample $N=10(n+1)$
training points uniformly in $\Ball{x_0}{\Delta}$, add forward-difference gradient targets
at $M=10$ of them, train surrogates with the Sobolev and the value-only loss, and measure
the uniform errors $e_f,e_g$ of Definition~\ref{def:error_profiles} on $300$ independent
probe points in the same ball. The sweep covers all thirteen problems at $n\in\{4,8,16\}$ and seven radii from $10^{-2}$
to $3.2$ with five seeds each, $273$ cells in all; reference gradients are central
differences at $h=10^{-6}$, accurate to about $10^{-9}$ here. Since the problems differ by
orders of magnitude in scale we report $e_g$ relative to the median $\norm{\grad f}$ over
the probe points, which makes the cells comparable, and intervals resample problems.

\begin{figure}[t]
\centering
\includegraphics[width=0.58\linewidth]{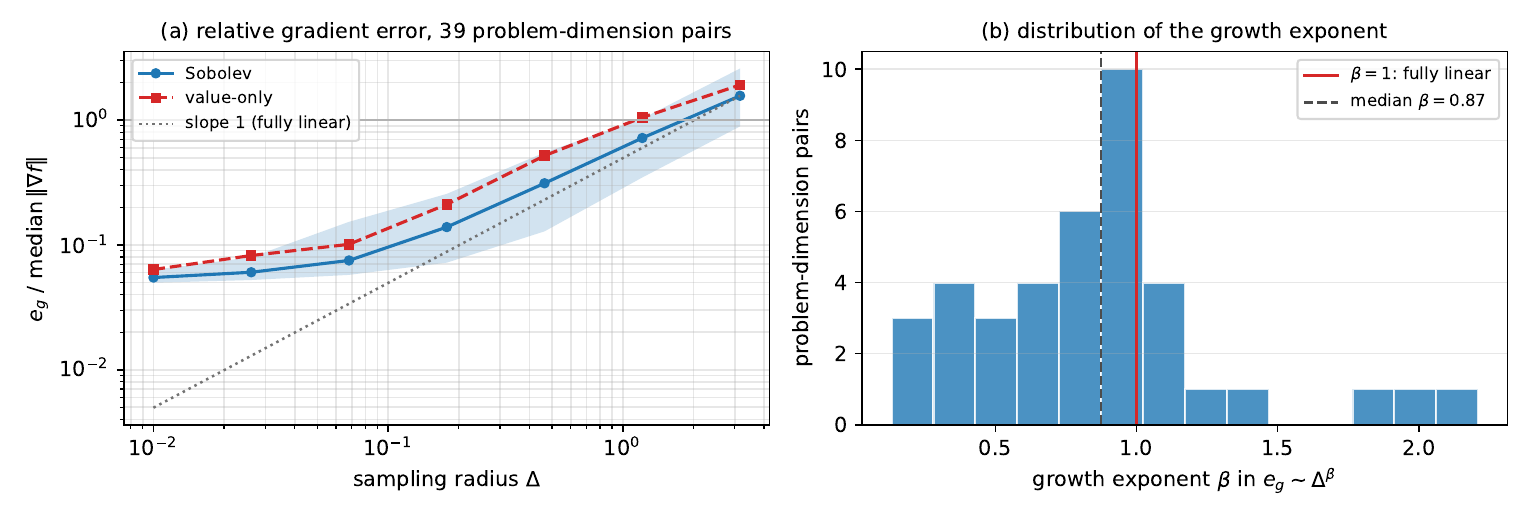}
\caption{Region-size sweep, $273$ cells. (a) Uniform gradient error relative to the median
gradient norm in the same ball, median over cells with a bootstrap band over problems; the
dotted line is the slope a fully linear model would have. (b) Distribution over the $39$
problem--dimension pairs of the fitted exponent $\beta$ in $e_g\sim\Delta^{\beta}$;
$69\%$ fall below the fully linear rate $\beta=1$, and those above it still carry a
nonzero floor at small $\Delta$.}
\label{fig:region_size}
\end{figure}

Figure~\ref{fig:region_size} shows the two effects predicted by
Section~\ref{sec:generalization}. The relative gradient error rises from $0.055$, with
interval $[0.050,0.063]$, at $\Delta=10^{-2}$ to $1.57$, with interval $[0.89,2.60]$, at
$\Delta=3.2$: at the largest radius the surrogate's gradient error exceeds the gradient it
is estimating, which by Proposition~\ref{prop:descent_inexact} is exactly where descent can no
longer be guaranteed. With a fixed sample budget the fill distance $\varepsilon$ scales with $\Delta$ and the
coverage term of Proposition~\ref{prop:uniform_grad} takes over, so the surrogate has a
finite effective generalization radius (Definition~\ref{def:rgen}).

The second effect is the more consequential, and the larger sweep sharpens it. Shrinking
the radius does not drive the error to zero. Over the smallest three radii, a factor of
seven in $\Delta$, the relative error moves only from $0.055$ to $0.075$; fitting
$e_g\sim\Delta^{\beta}$ on each problem--dimension pair gives a median exponent of
$\beta=0.87$ with interquartile range $[0.59,1.01]$, and $69\%$ of the $39$ pairs grow more
slowly than linearly. A fully linear model requires $e_g=O(\Delta)$, that is $\beta\ge1$
with the constant carrying no floor; what we observe instead is the training-error floor
$\epsilon_g$ of Proposition~\ref{prop:uniform_grad} dominating at small radii. An NN surrogate trained on an optimization-path-sized dataset is therefore \emph{not}
automatically fully linear at small radii, precisely the regime a replacement embedding
needs it to be, consistent with the caution of \cite{Giovannelli23T027}. Sobolev training reduces the relative error by a factor of $1.16$ to $1.66$ across radii,
most at moderate $\Delta$, but does not move either the small-radius floor or the
large-$\Delta$ wall.

\paragraph{Does depth move the floor?} The architecture is fixed at one hidden layer
throughout, so the floor could in principle be a property of that choice. Repeating the sweep with two hidden layers, the depth
\cite{Giovannelli23T027} use, at the same width as our single-layer network, over the same $273$ cells, moves it the wrong way: the ratio of relative gradient errors, two layers over one, is $1.069$ with
interval $[1.04,1.15]$ at $\Delta=10^{-2}$ and $1.042$ with $[1.01,1.12]$ at
$\Delta=2.6\times10^{-2}$, both excluding one, and it falls to $0.963$ with $[0.93,0.99]$
only at $\Delta\approx1.2$. Depth buys a few percent at radii where the error already exceeds the gradient it is
estimating and costs a few percent exactly where full linearity would be needed. Both
networks are trained value-only here, since the Sobolev loss's analytic parameter gradients
are derived for one hidden layer and the second-order terms a second layer introduces are a
source of silent error we chose not to risk; Section~\ref{sec:exp_sobolev} measures what
dropping the gradient term costs, so the two effects can be read together. Within that
qualification, the small-radius floor is not an artefact of shallowness.

That comparison holds the sample set fixed, so a second layer could still pay once the
surrogate steers the iteration. Run inside the assisted method on all $117$ instances it
does not: at $\tau=10^{-3}$ the depths are indistinguishable, $111$ instances against $112$,
and at $\tau=10^{-5}$, where the floor binds, the deeper network solves $97$ against $101$
and ends lower on $24$ instances. Acceptance barely moves, $0.199$ ($591/2967$) against
$0.204$ ($722/3532$), so the extra layer proposes more steps at the same success rate rather
than steps the test rejects less often. Depth is not the missing ingredient at either level,
the conclusion Section~\ref{sec:exp_arch} reaches for width and training length.

\subsection{Sobolev versus Value-Only Training}\label{sec:exp_sobolev}

The loss ablation connects the two previous experiments to the training signal. Across the
$117$ instances the surrogate acceptance rate falls from $0.703$ ($7488/10650$) with the
Sobolev loss to $0.148$ ($496/3354$) with value-only training, and the gap is not something
more training closes: both arms use the width-$6n$, $900$-iteration network. The effect on
the optimisation is as large as the effect of the embedding role. Value-only assistance
solves $66$ instances at $\tau=10^{-5}$ against $84$ for Sobolev assistance, below the base
method's $67$ rather than anywhere between the two. Because the two arms differ in one term of the loss and nothing else, the
comparison can be made instance by instance rather than through a pooled reference set:
the median of $\log_{10}$ of the ratio of final objective values, value-only over Sobolev,
is $+0.099$ with bootstrap interval $[+0.019,+0.214]$, and Sobolev is lower on $77$ of the
$117$ instances against $25$ the other way, $15$ tied ($p=2.5\times10^{-7}$, sign test).
The typical instance gains about a fifth of a decade, which is modest; the count is what
the claim rests on. This matches Proposition~\ref{prop:sobolev_curvature} and the trend-prediction discussion
of Section~\ref{sec:nn_motivation}: without gradient information the surrogate may fit the
sampled values well and still fail to produce descent directions that survive the
true-objective test. On the mildly nonlinear trigonometric and Broyden problems value-only
assistance does still help, the safeguard converting even a mediocre surrogate into a
no-worse-than-base option.

\subsection{Noise, and the Room the Base Method Leaves}\label{sec:exp_noise}

Simulation output is random, so the experiments above miss the feature that matters most in
practice. We repeat the comparison with an oracle returning $f(x)\,(1+\sigma z)$, where
$z\sim\mathcal{N}(0,1)$ is drawn afresh at every call: the relative-noise model used in the
noisy variants of the Mor\'e--Wild collection \cite{MoreWild2009}. Progress is scored on the
\emph{true} $f$ at the best point returned, so a solver gains nothing from a lucky read. The sweep covers ten problems at $n\in\{4,8\}$ and ten noise levels from $\sigma=0$ to
$\sigma=10^{-2}$, with five paired seeds per cell, $200$ cells, $1000$ paired comparisons and $2000$ solver runs in all; at a
given seed both methods meet the same noise stream, so every comparison is paired. We use $\epsilon=10^{-1}$ and a budget of $200(n+1)$ evaluations;
at the $\epsilon=10^{-3}$ of the deterministic study every solver, the base method included,
stalls immediately at any nonzero noise, which is itself part of what follows.

\begin{figure}[t]
\centering
\includegraphics[width=0.58\linewidth]{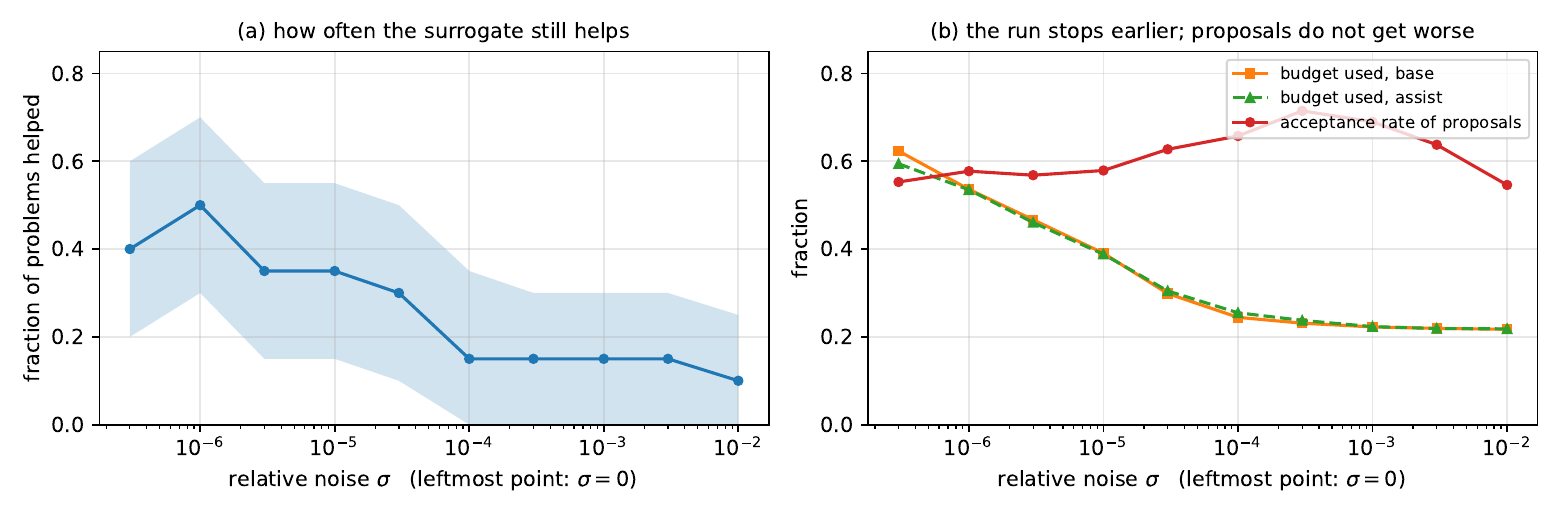}
\caption{Noise sweep, $200$ cells. (a) The fraction of problem--dimension pairs on which
safeguarded assistance still ends at least $1.26\times$ lower, with a bootstrap band over
problems; the decline is real but the band is wide throughout. (b) On the same runs, the
fraction of the evaluation budget consumed before the run stops, and the acceptance rate of
surrogate proposals. The runs get shorter; the proposals do not get worse.}
\label{fig:noise_sweep}
\end{figure}

\paragraph{There is no threshold in the noise.} Figure~\ref{fig:noise_sweep}(a) shows the
fraction of the twenty problem--dimension pairs that are helped falling from $0.70$ at
$\sigma=0$ and $0.75$ at $\sigma=10^{-6}$ to $0.35$ at $\sigma=10^{-4}$ and $0.15$ at
$\sigma=10^{-2}$. The bootstrap intervals overlap throughout, and the per-problem picture
explains why: the largest noise level at which a pair is still helped ranges over four
decades, from $10^{-6}$ on extended Freudenstein--Roth at $n=4$ to $10^{-2}$ on the
trigonometric function and the discrete integral equation, with a median of
$3\times10^{-5}$; three pairs are helped only in the noiseless case and one, variably
dimensioned at $n=8$, at no level at all. Where a pair is helped the effect is
substantial, the median gain among helped cells being $0.21$ in $\log_{10}$ with an
interquartile range of $[0.08,0.65]$, but which pairs those are depends on the problem far
more than on the noise. A clean cliff at $\sigma\approx3\times10^{-5}$ appears if one looks at two problems only; it
is the threshold of the Powell singular function, which the wider sweep reproduces exactly,
and not a property of the method.

\paragraph{What does change is the base method.} Figure~\ref{fig:noise_sweep}(b) locates the effect elsewhere. The fraction of the evaluation budget consumed before a run stops falls
monotonically from $62.3\%$ at $\sigma=0$ to $21.7\%$ at $\sigma=10^{-2}$, and the base
method and the assistance method track each other to within three percentage points at
every level: they stop at the same place. Surrogate proposals across the sweep fall from
$18682$ to $152$, a factor of $123$. Their acceptance rate falls too, from $0.843$ at
$\sigma=0$ to $0.520$ at $\sigma=10^{-2}$, but by a factor of $1.6$ against the proposals'
$123$. Two orders of magnitude separate the two effects, and only the smaller of them is
about the surrogate: proposals disappear because the run has already stopped, not because
they are being rejected.

The cause is the base method's Armijo test, which compares two noisy evaluations and, once
the noise exceeds the decrease being tested for, stops accepting anything; the inner loop
then shrinks the finite-difference stepsize indefinitely and the run stalls. Every surrogate proposal is validated by a test of exactly this kind, so a safeguard
inherits the noise tolerance of the test it relies on and cannot exceed it. Unsafeguarded
replacement is actively harmful: at $\sigma=10^{-6}$ on Powell it returns a median of $5.4$
against $0.066$ for the base method.

\paragraph{What would make a surrogate pay at this noise level?} None of it is a matter of
training the network better. The binding quantity is the variance entering the acceptance
test, which the surrogate inherits twice, through gradient targets that are finite
differences of noisy runs and through the test that validates its proposals. Gradient
targets would have to be estimated rather than differenced, by common random numbers,
batch regression, or an unbiased estimator where the model admits one; the acceptance test
would have to become a statistical comparison at a sample size the method controls; and the
surrogate would have to be trained on replication-averaged targets of known noise level.
Each is a change to the interface between model and oracle rather than to the model, which
is what the ablation below finds from the other side.

\paragraph{The third condition.} This qualifies the role and radius conditions rather than
replacing them. A surrogate accelerates a base method
that is still making progress and cannot accelerate one that has stopped, so the budget the
base method leaves unspent is the room available to any acceleration; under noise that room
closes well before the surrogate becomes the limiting factor. The consequence is an
ordering: make the acceptance test survive the noise first, through replication, common
random numbers or a statistical test in place of a deterministic sufficient-decrease
inequality, and only then ask what a surrogate adds. Section~\ref{sec:exp_sim} puts that
ordering to the test on a real simulation model.

\subsection{A Simulation Testbed, and What Breaks on It}\label{sec:exp_sim}

Section~\ref{sec:exp_noise} adds noise to analytic functions, which isolates a mechanism
but is not a simulation. The ordering it suggests --- repair the acceptance test first,
then ask about the surrogate --- is a claim about practice, and we test it here on a
genuine Monte-Carlo model.

\paragraph{The model.} Four products share a replenishment channel under a periodic-review
$(s,S)$ policy, with Poisson demand of mean $\lambda=(5,10,15,20)$: when the inventory
position of product $j$ falls to $s_j$ or below it is raised to $S_j$. A period in which
at least one product orders incurs a joint setup cost $K=60$, and each ordering product
incurs a minor setup cost of $8$, a unit purchase cost $c_j$, a holding cost $h_j$ and a
backorder penalty $p_j$. The joint setup couples the eight decision variables
$x=(s_1,d_1,\dots,s_4,d_4)$, where $S_j=s_j+\max(d_j,0)$, so the problem is not separable.
One oracle call runs one replication over $60$ periods and returns the realised average
cost; the single-replication standard deviation at the starting design is $3.5$ against a
mean of $222.3$, a relative noise of $1.6\%$. Reported costs are expected costs of the returned design, estimated separately under
common random numbers, so no solver gains from a lucky read. A random search under the same
$2000$-replication budget reaches $206.8$; we use it as a rough indication of what is
reachable, not as an optimum.

\paragraph{The methods of Sections~\ref{sec:exp_role}--\ref{sec:exp_noise} do not work
here.} The comparison uses $30$ seeds at each of three starting designs, costing $222.3$,
$239.3$ and $226.1$, for $90$ runs per method with a budget of $2000$ replications each.
A seed fixes the demand stream, so every method meets the same randomness at the same
start; comparisons are paired by construction and we report medians of paired differences
with bootstrap $95\%$ confidence intervals. With a deterministic oracle interface the
finite-difference base method returns its starting design on the median run, and so do both
surrogate variants: the median paired difference against the base method is $0.00$ for
safeguarded assistance and $+2.32$ for replacement, neither significant. The failure is
complete rather than marginal, and it does not originate in the surrogate.

The diagnosis follows from the stepsize rule $h_i=2\epsilon/(5\sqrt{n}\,2^{i}\sigma_k)$,
which ties the difference stepsize to the backtracking index. That coupling is right in a
deterministic setting, where shrinking $h$ with the step reduces truncation error, but
under a stochastic oracle the noise in a difference quotient scales as $h^{-1}$, so each
backtrack \emph{doubles} the gradient noise. At the starting design $h_0\approx0.14$ puts
the noise on each gradient component near $35$ while the true partial derivatives are of
order one, and the computed ``gradient'' is a random vector of norm about $100$. The
Armijo test then asks for a decrease of $\tfrac{1}{8}\alpha\|g\|^2\approx1250$ from a cost
of $222$, which is unattainable, so the method backtracks, worsens its own gradient
estimate, raises its own acceptance threshold and repeats until the inner loop is
exhausted. Raising the stepsize multiplier does not repair this, since the required decrease grows
with $\|g\|^2$ while $\epsilon$ controls the stepsize, the stationarity test and the
acceptance threshold at once. This is the mechanism of Section~\ref{sec:exp_noise} at a
noise level, $1.6\times10^{-2}$, far past where the synthetic sweep shows runs stalling.

\paragraph{A noise-aware interface.} We therefore modify the oracle interface, leaving the
surrogate machinery untouched: the difference stepsize is decoupled from the backtracking
index and set from an estimate $\hat\sigma_\varepsilon$ of the replication standard
deviation, balancing truncation against a noise term of order $\hat\sigma_\varepsilon/h$;
every evaluation entering a gradient averages $r$ replications, so gradient noise falls as
$r^{-1/2}$; and acceptance requires the observed decrease to exceed a standard Armijo term
and, optionally, $z$ standard errors of it,
\begin{equation}\label{eq:na_accept}
\hat f(x)-\hat f(x^{+})\ \ge\ c\,\alpha\|g\|^{2}\;+\;z\,\widehat{\mathrm{SE}},
\qquad
\widehat{\mathrm{SE}}=\hat\sigma_\varepsilon\sqrt{2/r},
\end{equation}
with $c=10^{-4}$, $z=1$ and $r=10$. Nothing here is new; the same ingredients underpin
adaptive-sampling trust-region methods for simulation optimisation, of which ASTRO-DF
\cite{ShashaaniHashemiPasupathy2018} is the reference example, and our point is precisely
that they are \emph{necessary} before a question about surrogates can even be posed.

\paragraph{Result.} Table~\ref{tab:inventory} reports the comparison.
Repairing the interface is worth a median paired
improvement of $10.40$ cost units, with bootstrap interval $[8.84,12.74]$, the repaired
method being ahead on $73$ of $90$ runs. Adding safeguarded surrogate assistance on top of
the repair is worth nothing measurable: the median paired difference is $0.00$ with
interval $[0.00,0.18]$, and the surrogate variant leads on only $33$ of $90$ runs.

A five-seed pilot had suggested a gain of about $0.7$ cost units; it does not survive at
$N=90$, and we report the correction. The repaired base method reaches $211.6$ against the
$206.8$ a random search of the same budget reaches, capturing roughly nine tenths of the achievable reduction on
its own, so little is left for any acceleration to claim. This is not a budget artefact:
raising the budget from $2000$ to $8000$ replications changes nothing, because both methods
stop when backtracking fails rather than when the budget runs out, and at budgets from
$200$ to $1200$ the paired difference never favours assistance either. The ordering the rest
of the paper is about survives in its weaker form: assistance does its base method no
systematic damage at either interface, its median paired difference being zero on the
deterministic interface and statistically indistinguishable from zero on the noise-aware
one, though individual runs go both ways, whereas replacement trails the base method on $85$
of $90$ runs.

\paragraph{Why the surrogate contributes nothing: it is almost never invoked.} A null
result of this kind is uninformative unless one knows what produced it, so we instrumented
the surrogate loop. Across thirty runs on the four-product model it issued $31$ proposals
in all, against $10650$ over the $117$ deterministic instances, and on thirteen of the
thirty runs it issued none. Five were accepted, a rate of $0.161$ against $0.703$ on the
deterministic side. The surrogate is not adding a small benefit our sample size cannot
resolve; it is barely being consulted, because the base method stops before it accumulates
the data a surrogate needs.

This is a stronger statement than a null effect, and it has two consequences. The first
concerns the mechanism. The surrogate's value targets are clean, being averages of
ten replications, but its gradient targets are finite-difference gradients formed from
those same noisy runs, and the diagnosis above shows how badly differencing amplifies
replication noise. Section~\ref{sec:exp_sobolev} established on the deterministic side
that acceptance depends on gradient information in the loss, the rate falling from $0.703$
to $0.148$ when it is removed; under simulation noise the gradient targets degrade further
and the rate falls to $0.161$. The three measurements line up: the surrogate loop needs
trustworthy gradient information, and simulation-grade noise degrades it.

The second concerns the safeguard. A surrogate that contributes nothing measurable cost the
method $31$ evaluations out of $60{,}000$ replications, one part in two thousand, which is what the role
distinction of Section~\ref{sec:background} predicts: under assistance a useless model
costs one evaluation per rejected proposal, whereas under replacement a bad model cost
entire sequences of failed backtracks ($735$ fallbacks across the benchmark). The inventory
model is a limiting case of the thesis rather than evidence against it.

\paragraph{Is the null result only a ceiling?} One explanation for the null result is
that the repaired base method already reaches $211.6$ against the $206.8$ of a
random search, some $2.3\%$ away, so a surrogate may simply have had nothing left to win.
We tested that explanation rather than assuming it. Enlarging the model to eight products,
$n=16$, leaves $8.3\%$ of the starting cost available, four times the headroom, and the
repaired base method there returns a median cost of $421.4$ against a random-search best of
$416.9$. Over $90$ paired runs the surrogate again adds nothing: the median difference is
$+0.00$, with bootstrap interval $[-0.25,+0.91]$, and assistance leads on $37$ of $90$
runs. The null result is therefore not an artefact of a saturated small model. Whatever
governs it is the noise, not the amount of room.

Taken with Sections~\ref{sec:exp_role} and \ref{sec:exp_noise}, a sequence emerges. On the
deterministic benchmark, where the base method spends its whole budget, the surrogate lifts
it from $67$ instances to $84$ while a released interpolation solver reaches $103$ without
one. Across the noise sweep the base method's runs shorten to a fifth of their budget and
the surrogate's contribution shortens with them. On two Monte-Carlo models the base method
does not move at all until its oracle interface is repaired, that repair is worth $10.40$
cost units, and the surrogate on top is worth $0.00$ on both. What moves along the sequence
is the per-step decrease relative to the noise in the test that accepts it, which suggests
an ordering of effort: fix the sampling and acceptance machinery first, and only then ask
whether a surrogate is worth its cost.

\begin{table}[t]
\centering
\caption{Joint-replenishment $(s,S)$ system: expected cost of the returned design over
$90$ runs ($30$ seeds $\times$ $3$ starting designs). Paired differences are taken against
the FD base method at the same seed and start, with bootstrap $95\%$ intervals for the
median; ``wins'' counts runs in which the method beats that base.}
\label{tab:inventory}
\begin{tabular}{l rr r@{\;}l c}
\toprule
Method & median & IQR & \multicolumn{2}{c}{paired diff.\ vs FD base} & wins\\
\midrule
FD base method            & 222.3 & $[219.3,226.1]$ & --- & & ---\\
NN assist (Sobolev)       & 222.3 & $[218.4,226.1]$ & $+0.00$ & $[+0.00,+0.00]$ & 21/90\\
NN replace                & 226.1 & $[222.3,239.3]$ & $+2.32$ & $[+0.00,+5.69]$ & 5/90\\
\midrule
Noise-aware base          & \textbf{211.6} & $[209.2,215.0]$ & $\mathbf{-10.40}$ & $\mathbf{[-12.74,-8.84]}$ & 73/90\\
Noise-aware $+$ NN assist & 212.2 & $[208.6,217.1]$ & $-10.47$ & $[-12.24,-8.28]$ & 71/90\\
\midrule
Adaptive-sampling TR       & 211.4 & $[210.6,212.4]$ & \multicolumn{2}{c}{---} & ---\\
\bottomrule
\end{tabular}
\end{table}

\paragraph{How far is this from the state of the art?} We implemented the mechanism that
defines the reference class, adaptive sampling: replications are increased until the
standard error falls below $\kappa\Delta^2$, with model, step and ratio test from the
trust-region machinery of Section~\ref{sec:exp_role}. This is the idea behind ASTRO-DF
\cite{ShashaaniHashemiPasupathy2018}; that solver is not installable here, so we run a
reproduction of its sampling rule rather than the published code. Over the same $90$ runs it
returns a median of $211.4$, interquartile range $[210.6,212.4]$, against $211.6$ for our
noise-aware base method: indistinguishable at this sample size. The method the surrogate
fails to improve is therefore level with an adaptive-sampling reference rather than a
strawman, and all three sit within five cost units of the $206.8$ a random search attains.

\paragraph{Which part of the repair does the work?} The interface changes three things at
once, so we ablate them. The result is not the one we expected, and only one of the three components survives the
larger sample. Replication is decisive: weakening it to a single replication per
evaluation costs $12.72$ cost units, with interval $[9.49,15.07]$, and even three
replications cost $4.76$, with interval $[1.84,6.81]$. Neither of the other two changes
has a detectable effect. Recoupling the difference stepsize to the backtracking index
costs $0.29$ with interval $[0.00,1.28]$, and removing the standard-error guard of
\eqref{eq:na_accept} is likewise indistinguishable from keeping it. We had expected that
guard to be the essential ingredient. It is not.

The surviving reading is therefore sharper and narrower than the one we started with. What
binds is the \emph{variance of the estimates entering the test}, not the form of the test
and not the stepsize schedule: ten replications shrink that variance enough for an ordinary
Armijo comparison to become reliable, while a more careful rule layered on noisy estimates
buys nothing. This matches the design of adaptive-sampling trust-region methods
\cite{ShashaaniHashemiPasupathy2018}, where the algorithm controls the sample size rather
than the acceptance inequality, and it revises Section~\ref{sec:exp_noise}: a safeguard is
limited by its acceptance test, but the way to lift that limit is to reduce the noise in
what the test compares, not to make the test cleverer.

\paragraph{Caveats.} We use the simplified base method of \cite{Taminiau2025Arxiv2502}
with $B_k=0$, not the quasi-Newton version. Practitioners of simulation optimisation would
not apply single-replication differencing in the first place, so the negative result
should be read as a statement about transplanting a published deterministic method onto a
stochastic oracle unchanged, not as a claim that the method class is unusable. The
noise-aware variant is a minimal repair rather than a competitive solver; a proper
comparison against adaptive-sampling methods such as ASTRO-DF is future work.

\subsection{Do the Redundancy Diagnostics Predict Anything?}\label{sec:exp_diag}

Section~\ref{sec:tradeoffs} defines two diagnostics, the overlap ratio $r_k$ and the
step-spacing statistic $N_{\mathrm{sp}}$. Both are natural candidates for deciding when a
surrogate should be trusted, and neither had been validated against data. We do that here, on the same $117$
instances as Section~\ref{sec:exp_role} rather than on a subset of them: at every outer
iteration of the assistance solver we log the two diagnostics, the numerical rank of the
normalized displacements of the training inputs, and four further candidates computed from
quantities the method already holds, and correlate each with the fraction of surrogate
proposals accepted at that iteration. This gives $3202$ outer iterations across the $116$
instances on which the solver completes at least one; the remaining instance exhausts its
budget inside the first finite-difference gradient.

Two methodological choices affect the result. Outer iterations are clustered within
instances, so intervals computed as though the $3202$ rows were independent are far too
narrow; every interval in Table~\ref{tab:signals} comes from a bootstrap that resamples
\emph{instances},
which is the level at which the design varies. And several candidates are heavily tied:
$N_{\mathrm{sp}}$ equals $1$ at $77\%$ of all iterations. Rank correlations must
therefore use mid-ranks, since ordinal ranking breaks ties by position in the log and so
manufactures correlation out of the order in which iterations were written. Ordinal ranking makes $N_{\mathrm{sp}}$ look predictive for precisely that reason.

\begin{table}[t]
\centering
\caption{Candidate gate statistics against the surrogate acceptance rate: $3202$ outer
iterations on $116$ instances at the width-$6n$, $900$-iteration setting. Spearman $\rho$ with mid-ranks; intervals from $800$
bootstrap replicates resampling instances rather than iterations.}
\label{tab:signals}
\begin{tabular}{P{4.9cm} P{4.0cm} r c c}
\toprule
\textbf{Statistic} & \textbf{What it measures} & $\rho$ & \textbf{95\% CI} & \textbf{Usable}\\
\midrule
$\kappa_k=\|\grad m_k\|/\|g_k^{\mathrm{FD}}\|$ & gradient-scale disagreement & $-0.492$ & $[-0.532,-0.439]$ & yes\\
acceptance rate at $k-1$ & recent track record & $+0.465$ & $[+0.392,+0.522]$ & yes\\
$\varrho_k$, training residual & fit quality on own data & $-0.394$ & $[-0.467,-0.307]$ & yes\\
$\cos(\grad m_k,g_k^{\mathrm{FD}})$ & gradient-direction agreement & $+0.186$ & $[+0.132,+0.232]$ & weak\\
decrease ratio at $k-1$ & overshoot of the last failed proposal & $-0.155$ & $[-0.234,-0.065]$ & mislabelled\\
$r_k$, overlap ratio & across-iteration redundancy & $-0.097$ & $[-0.176,-0.020]$ & negligible\\
$N_{\mathrm{sp}}$, step spacing & within-line-search redundancy & $-0.063$ & $[-0.096,-0.032]$ & negligible\\
directional rank & collapse onto a subspace & $-0.029$ & $[-0.145,+0.087]$ & no\\
\bottomrule
\end{tabular}
\end{table}

The results separate the coverage diagnostics from the model-quality ones, and the
separation is one of size rather than of existence. Two of the three coverage diagnostics
are statistically distinguishable from zero at this sample size, $-0.097$ for $r_k$ and
$-0.063$ for $N_{\mathrm{sp}}$, and the sign is the intuitive one: more redundancy, slightly
worse odds. But the effects are five to eight times smaller than those of the model-quality
statistics, $-0.492$ for the gradient-scale ratio and $+0.465$ for the previous acceptance
rate, and a diagnostic that moves the acceptance odds by a hundredth is not a gate. The
directional rank is not distinguishable from zero at all, and for it there is a plain
explanation: it is nearly always full, averaging $3.83$ of $4$, $7.54$ of $8$ and $14.42$ of
$16$ and attaining the maximum at $92\%$, $90\%$ and $81\%$ of iterations respectively, so
there is little variation for it to explain. Whatever limits the surrogate here, it is not a
training set collapsing onto a subspace.

The model-quality diagnostics do work. The strongest is the ratio
$\kappa_k=\|\grad m_k\|/\|g_k^{\mathrm{FD}}\|$ between the surrogate and finite-difference
gradient norms at $\rho=-0.492$, then the acceptance rate at the previous outer iteration
at $+0.465$ and the training residual $\varrho_k$ at $-0.394$. Each carries the sign a
mechanism would predict, an over-confident or poorly fitted model being the one whose
proposals fail, and the effect sizes are usable rather than marginal. Sorting outer
iterations by $\kappa_k$, the acceptance rate falls monotonically across quartiles from
$0.484$ to $0.343$, $0.106$ and $0.045$; split at $\kappa_k=2$ it is $0.415$ below against
$0.076$ above, a gap of $0.339$ with interval $[0.292,0.384]$. The previous-iteration
acceptance rate separates comparably, $0.475$ against $0.133$. One entry in Table~\ref{tab:signals} looked anomalous and turned out to be mislabelled, so
we record what it actually measures. The decrease ratio at $k-1$ correlates $-0.155$ with
interval $[-0.234,-0.065]$, the opposite of the sign a trust-region argument predicts. The
explanation is that the quantity logged is not the trust-region ratio of the previous
\emph{accepted} step but that of the previous \emph{proposal}, and since the surrogate loop
exits on its first rejection, that proposal is the rejected one at $99.2\%$ of iterations;
its median value is $-0.202$ and it exceeds $1$ at eleven of $3202$ iterations. Read correctly it is a measure of
how badly the last failed proposal overshot, and a worse overshoot at $k-1$ predicting a
worse model at $k$ is the expected sign, not a reversed one. Recomputing the statistic over
accepted steps only gives $+0.309$, the sign theory predicts, on the $1053$ of $3202$
iterations where the previous loop accepted anything at all. That last figure is the reason
we do not gate on it: a trust-region ratio is undefined at two thirds of the iterations
where a gate would have to fire, whereas $\kappa_k$ and $\varrho_k$ are defined at all of
them.

Two consequences follow. The concern raised about $N_{\mathrm{sp}}$ in
Section~\ref{sec:tradeoffs} was understated: it measures no directional coverage and what it
does predict is an order of magnitude too weak to act on, and the same holds for $r_k$. And
since the framework of Section~\ref{sec:aras} needs \emph{some} gate, $\kappa_k$ and
$\varrho_k$ supply one on the strength of an effect five to eight times larger.

\paragraph{Calibrating the gate.} A predictive signal is not yet a decision rule, so we
calibrate one. Gating the loop off at an outer iteration saves the $p$ evaluations its
proposals would have cost and gives up the $a$ descent steps it would have supplied, each
of which the base method must otherwise buy with a finite-difference gradient costing
$n+1$ evaluations; the net saving is $p-a(n+1)$. Maximising that quantity over $\kappa_k$
thresholds on half the instances, chosen at random, gives the same answer at every
candidate value: it is negative throughout, and the optimum is not to gate at all. Gating
pays only where the acceptance rate falls below $1/(n+1)$, that is $0.200$, $0.111$ and
$0.059$ at $n=4,8,16$, and even in the least trustworthy $\kappa_k$ quartile the observed
rates are $0.355$, $0.382$ and $0.114$. Two conventions differ from those of
Section~\ref{sec:aras} and both matter. These are pooled ratios, total accepted over total
proposed, which is what the accounting needs, whereas the quartile series quoted there is a
mean of per-iteration ratios and is lower because $90\%$ of high-$\kappa_k$ iterations make
a single proposal; and the quartiles here are taken within each dimension, since the
break-even $1/(n+1)$ is a per-dimension quantity. Every one clears its break-even, though at $n=16$ by a factor of
two rather than by the wide margins of the two smaller dimensions.

\paragraph{Validating it.} Running the gated solver at four thresholds against ungated
assistance on all $117$ instances confirms the prediction. The validation set contains the
calibration half, which we report rather than conceal: the calibrated answer is a corner
solution, not a fitted threshold, and what is being validated is a fresh solver run rather
than a re-scoring of the logs the threshold was chosen on. Ungated assistance solves $109$
instances at $\tau=10^{-5}$; gating at $\kappa_k>1$, $2$, $5$ and $30$, which fires at
$71.5\%$, $52.5\%$, $41.6\%$ and $28.9\%$ of iterations, solves $97$, $103$, $108$ and
$109$. The aggressive settings lose outright, the permissive ones converge back to the
ungated method as they stop firing, and the paired evaluation difference on jointly solved
instances is zero throughout. Gating at $\kappa_k>2$ avoids $1107$ proposals but forfeits
$925$ accepted steps, a net loss of about $8400$ evaluations at the mean $n+1\approx10.3$
here. Training the network properly widens the loss rather than narrowing it: the same
comparison at the undertrained setting cost six instances at $\kappa_k>1$ where it now costs
twelve, because a gate that fires on a model worth trusting forfeits more.

This is a completed calibration with a negative outcome rather than an open question, and
the outcome is informative: safeguarded assistance is already cheap enough that no gate on
top of it pays for itself while evaluations are cheap and the model is merely imperfect.
The criterion also says where a gate would pay, namely once acceptance drops below
$1/(n+1)$. The inventory model of Section~\ref{sec:exp_sim} does not supply the counter-example we
expected: with $n=8$ its break-even is $0.111$ and the measured rate is $5/31=0.161$, above
it, so the same rule leaves the gate off there too. What that model illustrates is the third
condition rather than a use for the gate\,--\,a surrogate consulted $31$ times in $60{,}000$
replications is not something a gate can improve. The correlations remain moderate, so $\kappa_k$ ranks iterations usefully rather than
separating them cleanly; what we can now state is the gate's operating regime, and that
this benchmark lies outside it.

\subsection{Is the Network the Problem?}\label{sec:exp_arch}

The architecture used throughout is not a free choice, and this section is where it was
made. We varied the two settings controlling capacity and convergence on all thirteen
problems at $n=8$: width in $\{n,3n,6n,12n\}$ and training length in $\{300,900\}$. An
earlier version of this paper reported every surrogate result at width $3n$ and $300$
iterations; the ablation showed that setting to be undertrained, and the whole benchmark,
the loss ablation, the noise sweep and the high-dimensional runs were repeated at the
setting selected here. Depth is varied separately, in Section~\ref{sec:exp_region}, and
value-only in both arms, since the analytic parameter gradients of the Sobolev loss in
Section~\ref{sec:exp_setup} are derived for a single hidden layer.

Training length matters and width barely does. At $300$ iterations every width solves $6$ of
$13$ at $\tau=10^{-5}$, exactly matching the base method; at $900$ the same widths solve
$8$, $9$ and $9$ for $3n$, $6n$ and $12n$, and the two best configurations, widths $6n$ and
$12n$, are indistinguishable. Capacity is not the binding constraint\,--\,width $n$ to $12n$
changes nothing at $300$ iterations and little at $900$\,--\,and convergence is. We take
$6n$ and $900$ forward.

That correction is the largest in this paper: the same embedding solves $75$ instances at
$\tau=10^{-5}$ at width $3n$ and $300$ iterations against $84$ at the setting above, so a
third of the measured gap between assistance and a model-based solver was a property of the
fit. What survives is every factor above, at the trained setting: replacement still loses to
its base, $65$ against $67$; value-only training leaves the surrogate below it at $66$; the
trust-region method still gains nothing, $86$ against $88$; released interpolation software
is ahead at every dimension. Better training moved the level and left the ordering. We
varied three settings one at a time and did not search the architecture jointly.

\subsection{What the Surrogate Costs in Time}\label{sec:exp_time}

Everything above counts objective evaluations, the usual currency in derivative-free
optimisation because in the applications that motivate it one evaluation dominates
everything else. That accounting hides the surrogate's own cost, which is not negligible for cheap oracles.
We measure it directly.

One training call, at the dataset caps of Section~\ref{sec:exp_setup} and with the $300$
Adam iterations used throughout, takes a median of $43$\,ms at $n=4$, $57$\,ms at $n=8$ and
$111$\,ms at $n=16$ in our pure-NumPy implementation. Evaluating the trained surrogate's
gradient costs about $18\,\mu$s and is negligible by comparison; training dominates. The
analytic test objectives, at $5.5\,\mu$s per call, are some four orders of magnitude
cheaper than a training call, so on the benchmark of Section~\ref{sec:exp_role} the
surrogate variants are far slower in wall-clock time than the base method even where they
are far cheaper in evaluations. This is expected rather than a defect of the method: it is precisely why the evaluation
count is the appropriate currency only for expensive oracles.

It remains to quantify how expensive an oracle must be. Over the $117$ instances the
assistance solver performs $3211$ trainings, issues $10650$ proposals and has $7488$ of them
accepted, that is $3.32$ proposals and $2.33$ accepted steps per training. An accepted
surrogate step is a descent step obtained without a finite-difference gradient, so it
displaces the $n+1$ evaluations that gradient would have cost, while every proposal costs
one true evaluation whether or not it is accepted. The net saving per training is therefore
about $2.33(n+1)-3.32$ evaluations, namely $8.3$ at $n=4$, $17.7$ at $n=8$ and $36.3$ at
$n=16$, and the surrogate pays for its own time once a single evaluation costs more than
$16.7$\,ms at $n=4$, $12.4$\,ms at $n=8$ and $14.7$\,ms at $n=16$. Crediting an accepted
step with a whole finite-difference gradient is generous, so these are lower bounds on the
oracle cost at which assistance becomes worthwhile. Training the network properly raises
both sides of that ledger: the yield per training rises by a factor of two and a half and
the cost of one training by a factor of three to five, so the threshold barely moves at
$n=4$ and rises at $n=8$ and $n=16$;
Section~\ref{sec:discussion} shows it continuing to rise beyond $n=16$.

Ten milliseconds is a low threshold for the simulation models that motivate this work,
since a single replication of the inventory model of Section~\ref{sec:exp_sim} already
exceeds it at the replication counts used there, but a high one for an analytic benchmark
function. The wall-clock case for a surrogate is therefore governed by the cost of the
oracle rather than by the surrogate itself. An optimised training implementation would
lower the threshold roughly in proportion to the speed-up it achieves.

\subsection{Scope and Limitations}\label{sec:exp_limits}

One limitation deserves to be stated before the others, because it bounds every claim here.
Our main studies use $n\le16$ and the probe of Section~\ref{sec:exp_role} reaches $n=128$ on
thirteen problems with one start each. Neural surrogates interest the field largely because
they are expected to scale where interpolation models do not, and at the dimensions we
reach that expectation is not met: a released interpolation solver is ahead at every one,
and the margin at $n=128$ is the margin at $n=4$. What we cannot claim is that this
continues. The thirteen problems are smooth and unconstrained, one start each at the two
largest dimensions, and $n$ in the thousands, where interpolation's $O(n^2)$ model
bookkeeping is genuinely prohibitive and subspace or sketching methods take over, is out of
reach here: a single $n=128$ run of the released solver took us up to $33$ hours. Whether
the ordering we observe persists there is the most important thing this paper leaves open,
and our own experience at $n=64$ is a caution against settling it with a
reimplementation. The experiments are limited in the usual ways as well. The deterministic study
uses one run per instance, which is exact because the solvers are deterministic there; the
region-size sweep covers $273$ cells and the noise sweep $200$, each at five seeds; the
inventory studies use $30$ seeds at three starting designs on two model sizes. The
diagnostic study of Section~\ref{sec:exp_diag} pools outer iterations across problems, so
its correlations still mix within-run and between-problem variation even though its
intervals are computed at the instance level. A word on the reference methods. Both are our implementations of the published schemes,
so we checked the one that can be checked. Running released Py-BOBYQA \cite{Powell2006NEWUOA} on all $117$ benchmark instances under
the same budget and the same counting oracle, the median $\log_{10}$ ratio of final
objective values between our trust-region method and the released solver is $0.000$, with
ours ahead on $38$ of $117$; neither dominates and the typical difference is nil. That agreement holds at $n\le16$ only. At $n=64$ it fails badly, our implementation solving
$5$ of $13$ against $12$ for the released solver, and Section~\ref{sec:exp_role} reports
what that cost us: a crossover we had measured and believed turned out to be an artefact of
our own code. A reimplementation validated at one problem size is not validated at another,
and the dimensions where an interpolation method is under most pressure are exactly where
careful engineering separates it from a straightforward version. The adaptive-sampling method of Section~\ref{sec:exp_sim} has no such check, no released
ASTRO-DF being available to us, and its numbers are a reference point rather than a
benchmark; released implementations do exist in the simulation-optimisation testbed
literature and running against one is the obvious next step there. Three things would settle what this paper leaves open. Our $n=128$ runs are incomplete, and
since the argument for learned surrogates concerns the regime where a full quadratic is
impractical, $n$ in the hundreds is where the question is decided. A formal benchmark on the
CUTEst and Mor\'e--Wild suites, noisy variants included, should use released code
throughout; our experience at $n=64$ is why we call that necessary rather than desirable.
And the assistance mechanism itself should be mounted on released software rather than on a
base method we wrote: the role distinction predicts that a safeguarded loop attached to
Py-BOBYQA, or to a released adaptive-sampling solver in the stochastic case, would cost
little and gain little, and that prediction is worth testing where a failure would be
visible. All of this quantifies mechanisms rather than state-of-the-art
performance, and the wall-clock figures of Section~\ref{sec:exp_time} come from an
unoptimised pure-NumPy trainer and so bound the surrogate's overhead from above.

\section{Discussion}\label{sec:discussion}
Our synthesis yields practical and theoretical implications.

\paragraph{What the experiments support.}
The measurements in Section~\ref{sec:experiments} line up with the mechanisms we have argued for. Holding the surrogate class and the training pipeline fixed, safeguarded assistance improved evaluation efficiency and gradient replacement degraded it. Uniform error grew quickly with the sampling radius and failed to shrink in proportion at small radii. Removing the gradient term from the loss cut the acceptance rate from $0.703$ to $0.148$. The stochastic studies add a fourth observation, and it is the one a simulation reader
should weigh most heavily. What noise removes is not the surrogate's accuracy but the base
method's ability to keep going: across the sweep its runs shorten until they use a fifth of
their budget, and the surrogate's contribution shortens with them: proposals fall by a
factor of $123$ while the rate at which they are accepted falls by a factor of $1.6$. On an actual Monte-Carlo model every method with a
deterministic oracle interface stalls at its starting design; replication repairs the
interface and recovers most of the achievable improvement, after which the surrogate adds
nothing. Over $90$ paired runs the median difference is zero, it stays zero on a larger
instance built to leave four times as much room, and the loop is invoked $31$ times in
$60{,}000$ replications against $10650$ over the benchmark. The safeguard holds the cost of
that idle model to one part in two thousand of the budget. Assistance does no systematic damage, which
is weaker than saying it does none: its median paired difference against the base method is
zero, but run by run it is worse on $20$ of the $90$ and better on $21$, the rest tied.
Replacement, by contrast, trails on $85$ of $90$. Read with the benchmark, where the same surrogate lifts high-accuracy solutions from $67$
to $84$ of $117$ over its own base, stays behind the $103$ a released interpolation solver
reaches alone, and lowers our own such solver from $88$ to $86$ when attached to it, and
where that ordering holds at $n=32$, $n=64$ and $n=128$ as well, the results bracket the regime in which
learned surrogates earn their cost: they close gaps that the base method leaves open, and
only those. On smooth problems the gap they close is worth as much as a competent
interpolation model at $n\le16$ and less than a well-engineered one above it. Our
experiments therefore establish the mechanism by which a learned local model helps, and
locate its usefulness where the base method is weak and the dimension is low enough that a
derivative-free budget can still buy the data the model needs.

\paragraph{What the mechanism faces above $n=100$.} Our experiments stop at $n=128$, so
the regime the motivation is really about is out of reach, but three of the surrogate's own
costs can be measured there without running a solver, and they point the wrong way. Training
one surrogate at the dataset caps used throughout takes $140$\,ms at $n=4$ and $31$\,s at
$n=128$, growing as $n^{2.0}$ over the upper half of that range, because the parameter count
at width $6n$ grows as $6n^{2}$. Each finite-difference gradient target costs $n+1$
evaluations, so the ten targets the training set holds consume a fifth of the evaluation
budget at every dimension, $1290$ of $6450$ at $n=128$. Combining these with the yield
measured in Section~\ref{sec:exp_time}, the oracle cost at which assistance pays for its own
wall-clock time is $16.7$, $12.4$ and $14.7$\,ms at $n=4$, $8$ and $16$, then $25.6$, $56.5$
and $104.4$\,ms at $n=32$, $64$ and $128$. The surrogate's economics improve with dimension
only while the avoided finite-difference gradient grows faster than the cost of fitting the
model, and past $n\approx8$ they stop doing so.

The arithmetic behind that turn is worth setting out, because it also explains why the
training correction of Section~\ref{sec:exp_arch} does not carry to high dimension. The
number of training points is fixed by the evaluation budget at $10(n+1)$, since each
gradient target costs a finite-difference gradient; the parameter count at width $6n$ is
$6n^2+12n+1$. The ratio therefore grows linearly, from $5$ parameters per point at $n=8$ to
$39$ at $n=64$ and $77$ at $n=128$. Widening the network and training it longer cures
underfitting at $n\le16$, where the ratio is single-digit, and cannot help where the ratio
is already large: the tuned setting is worth $18$ instances of $117$ at $n\le16$ and two of
thirteen at $n=128$. What limits a learned local model in high dimension on this evidence is
not its expressive power but the number of points a derivative-free budget can buy.

This is the opposite of the expectation that motivates learned surrogates. A network's
advantage over interpolation is supposed to be that its parameter count need not track the
number of interpolation conditions; ours grows quadratically because we scale width with
$n$, and a fixed-width network would trade that against capacity. The measurements indict
our scaling choice, not the idea, which makes an architecture whose cost grows slowly
without its local gradient accuracy collapsing the substantive open problem here, rather
than the same experiment at larger $n$.

The comparison method is not cheap at $n=128$ either: the thirteen released-solver runs of
Section~\ref{sec:exp_role} took $90$ hours between them, from $16$ seconds on the sphere to
$33$ hours on extended Wood, on an objective whose own evaluation costs microseconds.
Maintaining a $257$-point model for $6450$ evaluations is itself expensive. Both figures are
properties of Python implementations rather than of the two approaches, so we draw no
ranking from them; the point is only that at this dimension the wall-clock argument does not
separate the methods the way the evaluation counts do.

\paragraph{Smoothness vs trend prediction.}
Smoothing reduces oscillations induced by noisy or poorly tuned finite differences; trend
prediction produces descent directions that correlate with the objective's decrease over a
neighbourhood. Sobolev training encodes the latter and, under finite-difference stencils,
penalises surrogate curvature along coordinate directions
(Proposition~\ref{prop:sobolev_curvature}) \cite{Taminiau2025Arxiv2502}. With weight decay
and dataset capping this stabilises local surrogate descent in a way raw finite differences
do not.

\paragraph{Why ``replace'' can fail even with good approximation.}
Giovannelli et al.\ find that surrogate gradients do not reliably improve a strong
FD-BFGS-based method \cite{Giovannelli23T027}, which is consistent with the sensitivity of
quasi-Newton updates: inexact gradients affect both the descent direction and the curvature
update, and the errors accumulate. Without uniform error bounds or acceptance safeguards
the algorithm over-trusts the surrogate.

\paragraph{Why ``assist'' can succeed.}
Assistance designs use the surrogate to propose steps but validate each step by sufficient decrease in the true objective (Algorithm~\ref{alg:surrogate}). This decouples ``surrogate imagination'' from ``true objective acceptance'' and controls failure modes. Taminiau et al.\ formalize how repeated successful surrogate steps improve evaluation-complexity constants via the gain factor $\eta(S)$ \cite{Taminiau2025Arxiv2502}. Our propositions (Section~\ref{sec:errorbounds}) further clarify that, under bounded gradient error, surrogate steps are more likely to align with true descent, thereby passing the true-decrease filter.

\paragraph{Generalization as a radius question.}
Both papers implicitly highlight that surrogates are local: Giovannelli et al.\ use ball sampling and observe radius effects; Taminiau et al.\ cap datasets and operate within stepsize-controlled neighborhoods. Our effective generalization radius (Definition~\ref{def:rgen}) reframes generalization as ``how far can we trust the surrogate?'' This enables direct experimental estimation by radius sweeps and provides guidance on region-size control in learning-augmented loops.

\paragraph{Speed and cost drawbacks.}
NN surrogates impose training overhead and hyperparameter sensitivity. Giovannelli et al.\ emphasize that NNs can be evaluation-competitive at high training cost \cite{Giovannelli23T027}. Taminiau et al.\ restrict model complexity (shallow NN) and use dataset caps and warm starts to limit overhead \cite{Taminiau2025Arxiv2502}, but inference about the best surrogate (NN vs RBF) remains problem dependent. Our proposed experiments emphasize measuring both evaluation efficiency and wall-clock time.

\section{Conclusion}\label{sec:conclusion}
The two questions of the title have answers, and they are not the same answer. A neural
surrogate improves local approximation because it fits at once every evaluation the run has
already paid for, which averages the noise that differencing amplifies, and because
gradient information in the training loss restrains the curvature a value-only fit would
invent: remove that term and the acceptance rate falls from $0.703$ to $0.148$, taking the
instances solved to high accuracy back to exactly the base method's count. It improves the
\emph{optimisation} only under conditions we delimit empirically rather than prove. Three
factors bound the gain, on evidence of three different kinds. \emph{Role} we fail on purpose
and watch the benefit go: moving the surrogate from proposing candidates to supplying the
gradient drops it from $84$ instances to $65$, below the $67$ of the base method it was meant
to help. \emph{Radius} we establish at the level of approximation rather than of the solver;
that the same threshold binds inside a running solver follows from
Proposition~\ref{prop:descent_inexact}, not from a separate experiment. And what an earlier
draft called \emph{room} is two things we now keep apart: \emph{headroom}, the performance
the base method leaves on the table, which a model-based solver has little of; and
\emph{viability}, whether its acceptance test still functions, which noise destroys. The
eight-product inventory instance separates them, having four times the headroom and still no
gain, so headroom is not sufficient once viability has gone. Reading Giovannelli et al.\ and Taminiau et al.\ through the first
of them, algorithmic role, accounts for their opposite conclusions without either being
wrong: replacement is fragile because surrogate error enters the core updates, whereas
safeguarded assistance preserves robustness and can improve the evaluation-complexity
constant through the surrogate-gain mechanism.

We proposed a radius-aware notion of local generalisation with diagnostics for the quality
of the local data, and tested it. Assistance improved evaluation efficiency where
replacement degraded it; surrogate reliability fell away quickly outside a bounded region
and did not improve without limit inside it; Sobolev training was what made surrogate steps
survive validation against the true objective. Under a stochastic oracle no noise threshold
separates the regimes: the largest level at which a problem is still helped ranges over four
decades, and what shortens with noise is the base method's run, not the surrogate's success
rate.

That last finding is the one we would take forward. Safeguarding keeps a learned model from damaging a solver, but it offers no protection that the underlying acceptance test does not already provide, and on a real simulation model that limit binds immediately: with a deterministic oracle interface, every variant we tried returned its starting design. The minimal repair of Section~\ref{sec:exp_sim} removes the obstacle and restores the ordering observed in the deterministic experiments. Its ablation corrects our initial guess twice over: what matters is replication, not the
shape of the acceptance rule and not the stepsize schedule. The practical rule we would
draw from the three studies together is an ordering of effort. Repair the sampling and
acceptance machinery first, because nothing else matters until the acceptance test can tell
a real decrease from noise. Then ask whether a surrogate pays, and expect the answer to turn on how much headroom the
base method leaves and on whether its acceptance test still functions: a great deal of room
on the benchmark, essentially none on the inventory model. Role, radius and room are cheap
to check before any network is trained, and where they hold the improvement is real and
reproducible; where they do not, no amount of approximation accuracy substitutes for them.
That is why a question about \emph{when} admits a sharper answer than a question about
\emph{whether}.

What we have established is a set of base mechanisms and the conditions under which each
operates, measured where every variant can be run on every instance. Three tasks follow.
The dimensions at which interpolation genuinely breaks down, $n$ in the thousands, are
where Section~\ref{sec:discussion} predicts the picture should change; our evidence says
the obstacle there is the number of points a derivative-free budget can buy rather than the
network's capacity, which points at architectures whose parameter count does not track $n$.
Extended Wood shows the set of objectives on which a learned local model beats released
interpolation software is not empty, and a diagnostic identifying that set in advance would
turn an existence result into a rule. And the ordering of effort above should be tested
against a released adaptive-sampling solver \cite{ShashaaniHashemiPasupathy2018} rather than
the reproduction of its sampling rule we ran, on the CUTEst and Mor\'e--Wild suites.

\end{document}